\newtheorem{definition}{Definition}
\newtheorem{exmp}{Example}[section]
\newtheorem{theorem}{Theorem}[section]
\newcolumntype{P}[1]{>{\centering\arraybackslash}p{#1}}
\newcolumntype{M}[1]{>{\centering\arraybackslash}m{#1}}
  \providecommand\BibTeX{{%
    \normalfont B\kern-0.5em{\scshape i\kern-0.25em b}\kern-0.8em\TeX}}}
\definecolor{highlight}{RGB}{255,0,0}
\begin{document}

\title{Certain and Approximately Certain Models for Statistical Learning}
\author{Cheng Zhen}
\affiliation{%
 \institution{Oregon State University}
  \streetaddress{}
  \city{Corvallis}
  \state{Oregon}
 \country{}
  \postcode{97330}
}
\email{zhenc@oregonstate.edu}

\author{Nischal Aryal}
\affiliation{%
 \institution{Oregon State University}
  \streetaddress{}
  \city{Corvallis}
  \state{Oregon}
 \country{}
  \postcode{97330}
}
\email{aryaln@oregonstate.edu}

\author{Arash Termehchy}
\affiliation{%
 \institution{Oregon State University}
  \streetaddress{}
  \city{Corvallis}
  \state{Oregon}
 \country{}
  \postcode{97330}
}
\email{termehca@oregonstate.edu}

\author{Alireza Aghasi}
\affiliation{%
 \institution{Oregon State University}
  \streetaddress{}
  \city{Corvallis}
  \state{Oregon}
 \country{}
  \postcode{97330}
}
\email{alireza.aghasi@oregonstate.edu}

\author{Amandeep Singh Chabada}
\affiliation{%
 \institution{Oregon State University}
  \streetaddress{}
  \city{Corvallis}
  \state{Oregon}
 \country{}
  \postcode{97330}
}
\email{chabadaa@oregonstate.edu}
%
\renewcommand{\shortauthors}{Zhen and Aryal, et al.}

%
\begin{abstract}
Real-world data is often incomplete and contains missing values. To train accurate models over real-world datasets, users need to spend a substantial amount of time and resources imputing and finding proper values for missing data items. In this paper, we demonstrate that it is possible to learn accurate models directly from data with missing values for certain training data and target models. We propose a unified approach for checking the necessity of data imputation to learn accurate models across various widely-used machine learning paradigms. We build efficient algorithms with theoretical guarantees to check this necessity and return accurate models in cases where imputation is unnecessary. Our extensive experiments indicate that our proposed algorithms significantly reduce the amount of time and effort needed for data imputation without imposing considerable computational overhead.
\end{abstract}


\begin{CCSXML}
<ccs2012>
   <concept>
       <concept_id>10002951.10002952.10003219.10003218</concept_id>
       <concept_desc>Information systems~Data cleaning</concept_desc>
       <concept_significance>500</concept_significance>
       </concept>
 </ccs2012>
\end{CCSXML}

\ccsdesc[500]{Information systems~Data cleaning}




\maketitle
\section{Introduction}\label{sec1}
The performance of a machine learning (ML) model relies substantially on the quality of its training data. 
Real-world training data often contain a considerable number of examples with missing values, i.e., {\it incomplete data}.
One may train an ML model by ignoring the training examples with missing values.
This approach, however, may significantly reduce the accuracy of the resulting model as it may lose out on some useful examples \cite{van2018flexible}.

To address the problem of training over incomplete data, users usually replace each missing data item with a value, i.e., data imputation, and train their models over the resulting {\it repaired data}.
To repair incomplete data, users must figure out the mechanisms and causes of data missingness, e.g., completely at random or based on observed values of some features \cite{10.1093/biomet/63.3.581}.
Based on this mechanism, they build a (statistical) model for missing data and replace the missing values with some measurements defined over this model, e.g., mean. 
Users may also leverage a variety of ML models to repair missing values, e.g., tree-based or linear regression \cite{little2002statistical}.
Researchers have shown that the desired imputation method may vary depending on the downstream ML task \cite{NEURIPS2021_5fe8fdc7}.
Hence, it is often challenging to find a model of data missingness that results in an accurate ML model for a downstream task \cite{NEURIPS2021_5fe8fdc7}.
The aforementioned steps of finding a missingness mechanism, constructing an accurate missingness model, and finding the right statistical measurement(s) for imputation usually require a significant amount of time and manual effort.
Surveys indicate that most users spend about 80\% of their time on data preparation and repair \cite{krishnan2016activeclean,neutatz2021cleaning}. 

{Researchers have recently shown that one may learn accurate Datalog rules \cite{picado2020learning} and K-nearest neighbor classifier \cite{karlavs2020nearest,DBLP:conf/icdt/FanK22} over a training dataset without cleaning and repairing it.} 
Generally speaking, these methods check whether incomplete or inconsistent examples influence the target model.
If this is not the case, they return the model learned over the original training data.    
This approach may save significant time and effort spent repairing data.

However, it is not clear whether the methods above can be used to check the necessity of data repair for other ML models.
As opposed to learning Datalog rules or K-nearest neighbors, training popular ML models usually requires optimizing a continuous loss function.
Moreover, these methods detect the necessity of data repair only for classification problems and do not handle learning over missing data for regression models.
Also, each of these methods handles a single ML model.
Due to the relatively large number and variety of ML models, one would ideally like to have a single approach to the problem of learning over data with missing values for multiple types of ML models.


In this paper, we aim to develop a general approach for learning accurate ML models over training data with missing values without any data repair.
We focus on ML models that optimize loss functions over continuous spaces, which arguably contain the most popular ML models.
We formally define the necessity of data repair for learning accurate models over training data with missing values.
Our methods efficiently detect whether data repair is needed to learn accurate models.
If data repair is not necessary, they learn effective models over the original training data. 
Particularly, we make the following contributions: 
\begin{itemize}
  \item We formally define the conditions where data repair is not needed for training optimal models over incomplete data for a large group of ML models (Section \ref{sec:certain-models}).
    \item We prove necessary and sufficient conditions for learning an optimal model without repairing incomplete data for {\it linear regression}. Based on these conditions, we design an efficient algorithm for 1) checking the existence of the optimal model, and 2) learning the optimal model if it exists (Section \ref{sec:linear-regression}).
    
    \item We prove necessary and sufficient conditions for learning an optimal model without repairing incomplete data for {\it linear Support Vector Machine (SVM)}, a popular {\it classification} ML model. We present an efficient algorithm for checking and then learning the optimal model if it exists (Section \ref{sec:linear-svm}). 
    
    \item Linear SVM models only learn linear classifiers, limiting their representation power in nonlinear spaces. We prove necessary and sufficient conditions for learning an optimal model without repairing incomplete data for two popular {\it kernel SVMs}. Then we provide algorithms to check and then learn the optimal models for {\it each} kernel SVM (Section \ref{sec:kernel-SVM})
    
    \item We formalize the notion of certain models for {\it Deep Neural Networks} (DNNs). Due to the non-convexity of the loss functions in DNNs, it is challenging to design an algorithm that efficiently finds the optimal model for them.
    We prove the necessary conditions for having certain models for DNNs in some special cases (Section \ref{sec:DNN}). 

    \item {It might not be possible to learn an optimal model over incomplete data without any data repair. Hence, we introduce and formally define the condition under which it is possible to learn a model that is sufficiently close to an optimal model over incomplete data without any repair. We propose novel and efficient algorithms to check for the existence of these models over linear regression and SVM (Section \ref{sec:Approximate})}. 
     
    \item  We conduct experiments to show cost savings in data cleaning and program execution time compared to mean imputation, a deep learning-based imputation algorithm, and a benchmark framework across real-world datasets with random corruption. We also extend the comparison to diverse real-world datasets with inherent missing values, yielding results consistent with randomly corrupted datasets. Our studies show that our algorithms significantly reduce data repair costs when optimal or approximately optimal models can be learned over incomplete data and introduce minimal computational overhead in other cases (Section \ref{sec:experimental-evaluation}).
\end{itemize}

\section{Background}\label{sec:Background}

\subsection{Supervised Learning}
\label{sec:supervised-learning}
In this section, we review ML terminology and notations.

\begin{table}[h]
  \caption{A training dataset for rain prediction}

  \label{tab:missing_value}
  \begin{tabular}{cccl}
    \toprule
    & Temperature(F)  & Humidity($\%$) & Rainfall\\
    \midrule
    Seattle & 65 & 80 & 1\\
    New York  & 50 & $null$ & -1\\
    \bottomrule
  \end{tabular}
\end{table}
\vspace{-3mm}
\paragraph{{\bf Dataset}} In ML, we work with a relation consisting of a finite number of attributes and tuples. {\it For instance, the relation shown in \autoref{tab:missing_value} has two tuples and three attributes.} For an ML problem, a relation with tuples and attributes is generally referred to as a {\bf dataset} with {\bf rows} and {\bf columns}. In supervised learning, an ML model takes certain columns from a dataset as input and makes predictions for a designated output column.

\paragraph{{\bf Features}} The columns of the dataset we provide as input to an ML model are called features. {\it In \autoref{tab:missing_value}, Temperature and Humidity are the two features that provide information on atmospheric conditions}. We denote a single feature as \textbf{z} and $d$ features as $[\textbf{z}_{1}, ..., \textbf{z}_{d}]$. The {\it domain} of feature $\mathbf{z_i}$ is the set of values that appear in feature $\mathbf{z_i}$. To simplify our exposition, we assume that the domain of all values in a feature is the set of real numbers $\mathbb{R}$.

\paragraph{{\bf Label}} The column of the dataset we want an ML model to make predictions on is called a label. {\it In our example, given current atmospheric conditions we want to predict chances of Rainfall. Therefore Rainfall is the label column, and it takes on two possible values: -1 to denote No Rain and 1 for Rain}. We represent a single label as $y$ and the entire label column, consisting of $n$ labels, as a vector $\textbf{y} = [y_{1}, ..., y_{n}]$.



\paragraph{{\bf Training Example}} We refer to a row in the dataset as a training example. {\it In \autoref{tab:missing_value}, we observe two examples, Seattle and New York}. We denote a single training example as \textbf{x}. For $n$ training examples, a {\bf training set} is a collection of an input matrix $\textbf{X} =[\textbf{x}_{1}, ..., \textbf{x}_{n}]^{T}$ and a corresponding label vector $\textbf{y} = [y_{1}, ..., y_{n}]^{T}$. Each training example with $d$ features in $\textbf{X}$ can be expressed as a vector $\textbf{x}_i = [x_{i1}, ..., x_{id}]$, where $x_{ij}$ represents the $j^{th}$ feature in the $i^{th}$ example.

\paragraph{{\bf Target Function}}
We define the domain of examples as $\mathcal{X}$ and the domain of labels as $\mathcal{Y}$. For $n$ examples and $d$ features, we assume, $\mathcal{X}$ and $\mathcal{Y}$ are $\mathbb{R}^{n\times d}$, and $\mathbb{R}^{n}$, respectively. A target function $f(\textbf{X}, \textbf{w})$ transforms feature inputs into label outputs based on model $\textbf{w}$, represented as $f(\textbf{X}, \textbf{w}): \mathcal{X} \rightarrow \mathcal{Y}$. Here, model $\textbf{w}$ is a real-valued vector parameterizing the space of target functions that map from $\mathcal{X}$ to $\mathcal{Y}$. For instance, consider a single training example $\mathbf{x_i}$ consisting of $d$ features.  Given a vector of real numbers {\bf w}, the target function may be a {\it linear transformation} of the example $\mathbf{x_i}$, i.e $f(\mathbf{x_i},\mathbf{w})= w_1\cdot x_{i1} + ... + w_d\cdot x_{id}$.
\vspace{-3mm}
\begin{figure}[H]
  \begin{subfigure}{0.45\linewidth}
  \includegraphics[width = \linewidth]{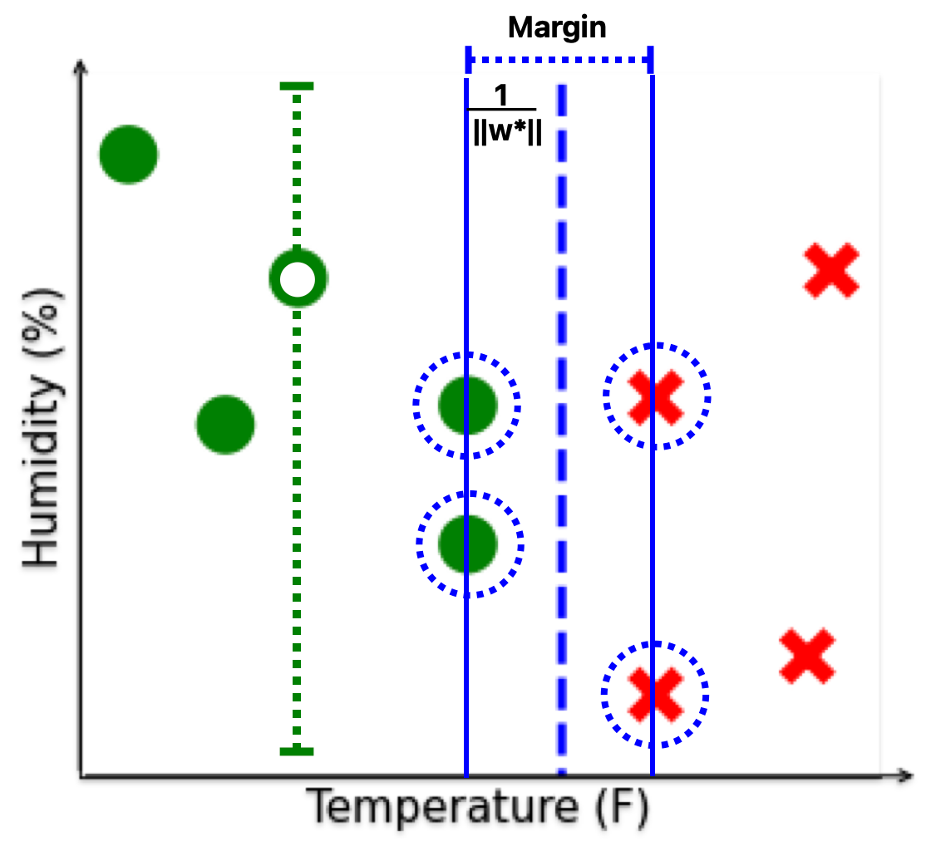}
  \caption{Data cleaning is not needed}
  \label{fig:1_a}
  \end{subfigure}
\begin{subfigure}{0.44\linewidth}
  \includegraphics[width = \linewidth]{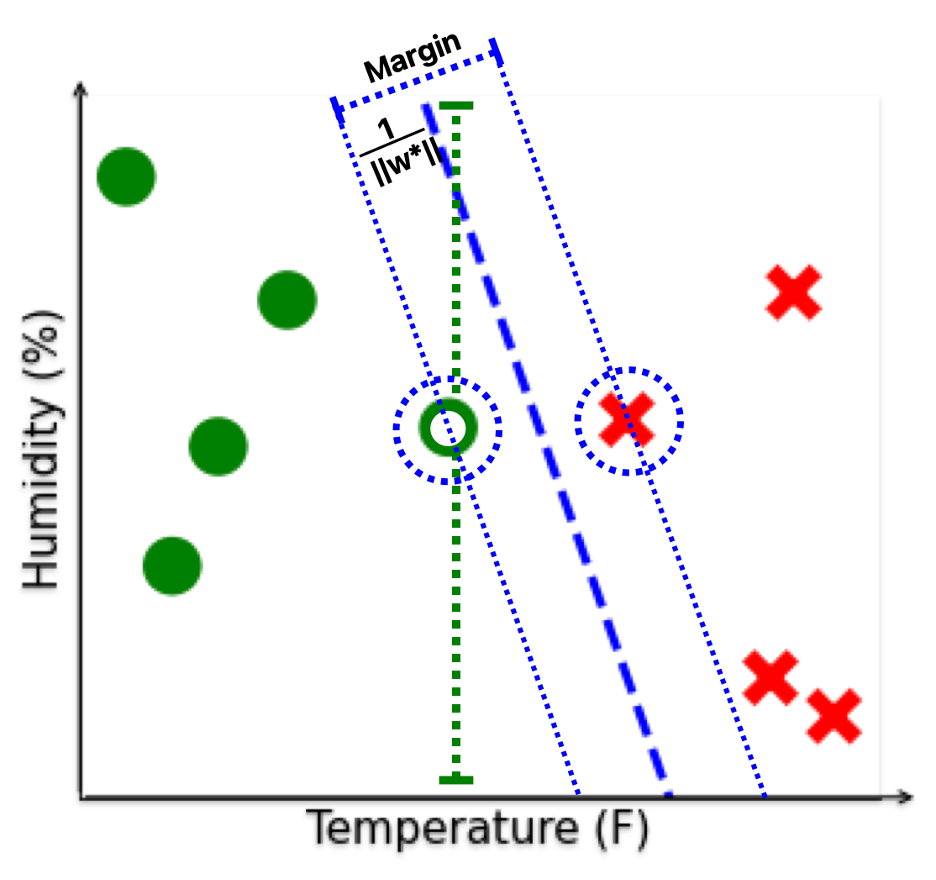}
  \caption{Data cleaning is needed}
  \label{fig:1_b}
  \end{subfigure}
  \vspace{-3mm}
   \caption{Data cleaning may not always be necessary}
    \label{fig:1}
\end{figure}
\vspace{-5mm}

\begin{exmp}
Consider \autoref{fig:1_a}, which uses a popular ML algorithm called Support Vector Machine (SVM). The goal is to learn a linear boundary (blue rectangle) for rain prediction using temperature and humidity features from different cities (examples $\mathbf{X}$). The boundary (margin) separates the examples based on their Rain outcomes. The target function transforms all examples to one of the two possible $\textbf{y}$ values [1, -1]. The approximation of the {\bf target function} is $f(\mathbf{X},\mathbf{w})$=$\mathbf{w}^T\mathbf{X}$. 
\end{exmp}

\paragraph{{\bf Loss function}} A loss function, $\mathcal{L}$, is defined as a mapping of prediction for an example $\mathbf{x_i}$, i.e., $f(\mathbf{x_i},\mathbf{w})$, with its corresponding label $y_i$ to a real number $l \in \mathbb{R}$. $l$ captures the similarity between $f(\mathbf{x_i},\mathbf{w})$ and $y_i$. The exact form of the loss function varies between ML problems. One reasonable measure to capture similarity is to get the difference between prediction $f(\mathbf{x_i},\mathbf{w})$ and actual label $y_i$. Aggregating over the $n$ examples in the input matrix ($\mathbf{X}$), we find the {\it overall loss function}, $L$: $L(f(\mathbf{X},\mathbf{w}), \mathbf{y}) = \frac{1}{n} \sum_{i=1}^{n} \mathcal{L}(f(\mathbf{x_i},\mathbf{w}), y_i)$ = $\frac{1}{n} \sum_{i=1}^{n}(f(\mathbf{x_i},\mathbf{w}) - y_i)^2$. For the rest of the paper, we will refer to the {\it `overall loss function'} as the {\bf loss function} since we will be working with a matrix of examples rather than individual examples.
\begin{exmp}
     For the SVM in \autoref{fig:1}, the \textbf{loss function}, L, is defined as $L(f(\mathbf{X},\mathbf{w}), \mathbf{y}) = \frac{1}{2}\|\mathbf{w}\|^2_2 + C\sum_{i=1}^{n}\max\{0, 1-y_i\mathbf{w}^T\mathbf{x}_i\}$. 

     \noindent
     Here, $\mathbf{w}^T\mathbf{x}_i$ comes from the \textbf{target function} and represents the model's prediction for an example $\mathbf{x}_i$. The actual label is denoted as $y_i$. When the prediction and the label have the same sign, they are similar, therefore the loss is lower. The notation $|| \cdot ||^2_2$ represents the squared Euclidean norm, and $C \in (0, +\infty)$ is a tunable parameter.
\end{exmp}

\paragraph{{\bf Classification and Regression}} Supervised learning is divided into two types of ML problems. In a classification problem, the label domain $\mathcal{Y}$ consists of discrete values {\it (such as Rain(1) or No Rain(-1))}. Whereas if the label domain consists of continuous values {\it (e.g. inches of Rainfall)}, then it is a regression problem.

\paragraph{{\bf Model Training}} Taking an input matrix $\mathbf{X}$, a label vector $\mathbf{y}$, a targte function $f$, and a loss function $L$, the goal of training is to find an optimal model $\mathbf{w}^*$ that minimizes the {\bf training loss}, i.e., $\mathbf{w}^* = \arg \min\limits_{\mathbf{w}\in \mathcal{W}} L(f(\mathbf{X},\mathbf{w}),\mathbf{y})$. 

\begin{exmp}
     For the SVM in \autoref{fig:1}, \textbf{optimal model} $\textbf{w}^*$ is the model that creates the widest margin between the example with different labels (red and green) while ensuring accurate predictions, to minimize training loss.
\end{exmp}

\subsection{Missing Values and Repairs}\label{sec:missing-value-repairs}
In this section, we formally define concepts for missing value repair.

\paragraph{{\bf Missing values}} Any $x_{ij}$ is a missing value (MV) if it is unknown (marked by \textit{null}). We use the term {\bf incomplete example} to refer to an example with missing values, and {\bf incomplete feature} for a feature that contains missing values. Conversely, we use the terms {\bf complete feature} and {\bf complete example} to describe features and examples that are free of missing values. We further denote the set of incomplete examples as $MV(\mathbf{x}) = \{\mathbf{x}_i | \exists x_{ij},  x_{ij} = \textit{null} \}$, and the set of incomplete features as $MV(\mathbf{z}) = \{\mathbf{z}_j | \exists x_{ij},  x_{ij} = \textit{null} \}$.

\begin{exmp}
In \autoref{tab:missing_value}, the Humidity feature is an \textbf{incomplete feature} while the Temperature feature is a \textbf{complete feature}. Similarly, the New York example is an \textbf{incomplete example}, and the Seattle example is a \textbf{complete example}
\end{exmp}

\paragraph{{\bf Repair}}A repair is a complete version of the raw data where all missing values (MV) are imputed i.e. replaced with values from the domain of features or examples (Subsection \ref{sec:supervised-learning}). More formally: 
\begin{definition}(Repair)
\label{definition_repair}
  For an input matrix $\textbf{X}$ having missing values (MV), $\textbf{X}^{r}$ is a repair to $\textbf{X}$ if 1) $dimension(\textbf{X}^{r}) = dimension(\textbf{X})$, 2) $\forall x^r_{ij} \in \textbf{X}^r, x^r_{ij} \neq \textit{null}$, and 3) $\forall x_{ij}\neq\textit{null}$,  $x^{r}_{ij} = x_{ij}$.
\end{definition}

\begin{exmp}
In Table \ref{tab:missing_value}, the Humidity feature for the New York \textbf{example} has a missing value. From Definition \autoref{definition_repair}, replacing the missing data with a value (e.g. $90$) yields a \textbf{repair} (\textbf{X}$^r$). However, deleting the humidity feature, which eliminates the missing value, is not a repair since it changes dimension(X).

\end{exmp}

\paragraph{{\bf Set of possible repairs}} The range of values that can be used to replace missing values is large. Consequently, a large number of repairs may exist. We denote this set of all possible repairs as  $\textbf{X}^{R}$. For brevity, we refer to {\it`a value replacing the missing value'} as a {\bf repairing value}.

\section{Certain Models}\label{sec:certain-models}
In this section, we formally define certain models that minimize training loss irrespective of how missing data is repaired.

\begin{definition}\label{definition:certain-model}(Certain Model) A model $\textbf{w}^*$ is a certain model if:

\begin{equation}\label{eq:1}
    \forall \textbf{X}^{r} \in \textbf{X}^{R},\textbf{w}^* = \mathop{\arg \min}\limits_{\textbf{w}\in \mathcal{W}} L(f(\textbf{X}^{r}, \textbf{w}), \textbf{y})
\end{equation}
\end{definition}
\noindent
{\it Where $\textbf{X}^{r}$ is one possible repair, $\textbf{X}^{R}$ is the set of all possible repairs and $L(f(\textbf{X}^{r}, \textbf{w}), \textbf{y})$ is the loss function}
\vspace{3mm}

\noindent
\textbf{Definition's intuition}: Intuitively, Definition \ref{definition:certain-model} says that if a model is optimal (minimizes the training loss) for all possible repairs, this model is a certain model. 

\begin{exmp}
  Consider the ML problem in Figure \ref{fig:1}. Figures \ref{fig:1_a} and \ref{fig:1_b} display two sets of training examples with a missing humidity value, possibly due to a malfunctioning sensor. The green dashed line represents the range of possible values for the incomplete feature (empty circle). In Figure \ref{fig:1_a}, the incomplete example does not touch the blue rectangle in any possible repair ($\textbf{X}^{r} \in \textbf{X}^{R}$). Therefore, the model (decision boundary: blue dashed line) is optimal for all repairs. Hence, a certain model ($\textbf{w}^*$) exists. But, in Figure \ref{fig:1_b}, since the example may touch the blue rectangle in many repairs, the optimal model changes from one repair to another and certain models do not exist.
 \end{exmp}
 \vspace{-2mm}
 \noindent
\textbf{Advantages of finding certain models}: To repair incomplete data users may resort to methods such as deleting data (e.g., entire examples or features), potentially leading to information loss. Another option is data imputation, which requires additional effort and domain expertise ~\cite{van2018flexible}. Regardless of how well these data repair techniques are constructed, they may produce suboptimal results, i.e., the repaired data is not the ground truth~\cite{liu2021Adaptive}. However when a certain model exists, {\it imputing missing data is unnecessary} since this model is optimal for all possible repairs. Therefore users may save a significant amount of time and effort by finding certain models. 
{Users may ignore missing values 
in practice to investigate the properties of the trained model. Nonetheless, there is no guarantee that their trained model is accurate. The concept of certain models ensures cases for which this approach leads to accurate models.}




\noindent
{\textbf{Prevalence of certain models:}} Certain models may not often exist from the restrictive definition (a model is optimal for {\it all} repairs). However, when they exist, we save a significant amount of time and resources. Furthermore, these savings significantly grow as the number of datasets increases alongside the rapid expansion of the ML community utilizing these datasets for model training.

\noindent
\textbf{Problems:} We aim to solve our problem of finding certain models by solving the following sub-problems.

\begin{enumerate}
    \item \textbf{Certain Model Checking}: Given {\bf inputs} (1) a training set consisting of a feature matrix $\textbf{X}$ potentially with missing values and a label vector $\textbf{y}$ (2) a target function $f(\textbf{X}, \textbf{w})$ and (3) a loss function $L$. The first problem is to determine \textit{whether a certain model $\textbf{w}^*$ exists} that minimizes the training loss $L(f(\textbf{X}, \textbf{w}), \textbf{y})$ for all repairs ($\forall \textbf{X}^{r} \in \textbf{X}^{R}$) to the incomplete dataset. If a certain model ($\textbf{w}^*$) exists, it implies that data imputation is unnecessary.
    \item \textbf{Certain Model Learning}: If a certain model exists, then the second problem is \textit{learning a certain model} ($\textbf{w}^*$), given a training set, loss function, and a target function as {\bf inputs}. This certain model {\bf output} can be used for downstream tasks.
\end{enumerate}

\noindent
{\textbf{Minimal overhead of not finding certain models:} When certain models exist users {\it do not have to spend any effort} in repairing missing data. When certain models do not exist, the effort to check for them may appear wasteful. Therefore, an ideal solution would require minimal time to check for certain models even when they do not exist. Consequently, the {\it overhead of checking certain models} is negligible compared to the significant time and resources users may save by {\it finding certain models}.

\noindent
\textbf{Baseline Algorithm:} Given Equation \ref{eq:1},  a \textit{baseline algorithm} for checking and learning a certain model is: (1) learning \textit{possible models} from all possible repairs one by one, and (2) a certain model exists if all repairs share at least one mutual optimal model. Here, the set of possible repairs is often large (Subsection \ref{sec:missing-value-repairs}). Therefore, {\it learning models from all repairs may be incredibly slow}. More precisely, if we denote the training time for learning one model as $\mathcal{O}(T_{train})$, the baseline algorithm's complexity grows in proportion to the size of all possible repairs ($\textbf{X}^R$). This results in a complexity of $\mathcal{O}(|\textbf{X}^R| * T_{train})$, where $|\textbf{X}^R|$ represents the total number of possible repairs. Therefore, we aim to find {\it efficient algorithms to check for certain models in multiple ML problems}.

\section{Certain Models for Linear Regression}\label{sec:linear-regression}
{\bf Linear regression} is a popular and classic ML model. 
It assumes a linear relationship between feature input (\textbf{X}) and label output (\textbf{y}). The difference between the model's prediction and actual label, $\textbf{X}\textbf{w} - \textbf{y}$, is the residue $\textbf{e} = [e_1, ..., e_n]$. 

The loss function (Section \ref{sec:Background}) for linear regression is $L(f(\textbf{X}, \textbf{w}), \textbf{y})$ $ = ||\textbf{X}\textbf{w}-\textbf{y}||^2_2$. Here, $\| \cdot \|^2_2$ represents the squared Euclidean norm.

\subsection{Conditions For Having Certain Models}
\label{subsec:problem-formulation-linear}
Based on the definition of certain models (Definition \autoref{definition:certain-model}), the \textbf{certain model $\textbf{w}^*$ for linear regression} is defined as: 

\begin{equation}
\label{equation:linear_problem_formulation}
\forall \textbf{X}^{r} \in \textbf{X}^{R}, \textbf{w}^* = \mathop{\arg \min}\limits_{\textbf{w}\in \mathcal{W}} ||\textbf{X}^{r}\textbf{w}-\textbf{y}||^2_2
\end{equation}
\noindent
{\it Where $\textbf{X}^{r}$ is one possible repair to the input matrix $\textbf{X}$, $\textbf{X}^{R}$ is set of all possible repairs and $||\textbf{X}^{r}\textbf{w}-\textbf{y}||^2_2$ is the loss function}
\vspace{2mm}

Linear regression finds a model $\mathbf{w}^* \in \mathbb{R}^d$ such that the linear combination of all feature vectors, $w_1^*\textbf{z}_1 + ... + w_d^*\textbf{z}_d$,  has the shortest Euclidean distance to the label vector $\textbf{y}$, i.e., the minimum training loss. Intuitively, a certain model exists when this Euclidean distance is independent of any incomplete features $\textbf{z}_j, j \in MV(\textbf{z})$. 

To formalize this intuition and avoid checking for all possible repairs, we introduce Theorem~\ref{theorem;linearregression}. Given an input matrix with $n$ examples and $d$ features, $\textbf{X} \in \mathbb{R}^{n \times d}$, we denote a missing-value-free ({\bf c}omplete) matrix $\textbf{X}_c \in \mathbb{R}^{n \times m}$ as a submatrix $(m < d)$ of the input matrix. $\textbf{X}_c$ only consists of the $m$ complete features $\textbf{z}_j$ from $\textbf{X}$, $ \textbf{z}_j \notin MV(\textbf{z})$. Performing linear regression with $\textbf{X}_c$ and the labels $\textbf{y}$, we get the model $\textbf{w}^*_{c} \in \mathbb{R}^m$. To facilitate subsequent analysis, we introduce another model $\textbf{w}^{\bullet}$ by expanding $\textbf{w}^*_c $ from $\mathbb{R}^m$ to $ \mathbb{R}^d$ by appending $(d - m)$ zero coefficients corresponding to incomplete features. For example, if the columns 2 and 4 in $\textbf{X} \in \mathbb{R}^{4}$ contain missing values, and $\textbf{w}_c^* = [1, 1]^T$, we create $\textbf{w}^{\bullet}$ by expanding $\textbf{w}_c^*$ to $\mathbb{R}^{4}$ and inserting zeros in the second and fourth entries. This process results in an {\it expanded model}, $\textbf{w}^{\bullet} = [1,0,1,0]^T$. This step aligns the linear coefficients between $\textbf{X}_c$ and $\textbf{X}^r$, simplifying the following theorems and proof. 

\begin{lemma}
\label{lemma;linearregression}
If a certain model $\textbf{w}^*$ exists, $\forall \textbf{z}_j \in MV(\textbf{z})$, the corresponding coefficient $w_j^* = 0$. In other words, if a certain model exists, $\textbf{w}^{\bullet}$ is a certain model. 
\end{lemma}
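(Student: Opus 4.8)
The plan is to avoid reasoning directly about how the minimum attainable loss moves when a repair changes, and instead route everything through the first-order optimality (normal) equations for least squares. Recall that for any fixed matrix $\textbf{X}^r$, the map $\textbf{w}\mapsto\|\textbf{X}^r\textbf{w}-\textbf{y}\|_2^2$ is a convex quadratic that is bounded below and attains its minimum exactly on the (always nonempty) solution set of $(\textbf{X}^r)^\top\textbf{X}^r\textbf{w}=(\textbf{X}^r)^\top\textbf{y}$. Hence, if $\textbf{w}^*$ is a certain model, it must solve these normal equations for \emph{every} repair $\textbf{X}^r\in\textbf{X}^R$ simultaneously. The whole argument then consists of squeezing a contradiction out of this "for every repair'' quantifier under the assumption that some incomplete coefficient is nonzero.

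First I would assume for contradiction that $w_j^*\neq 0$ for some incomplete feature $\textbf{z}_j\in MV(\textbf{z})$, fix a missing cell $x_{ij}=\textit{null}$ in that column, and fix arbitrary admissible values for all the other missing cells; a repair is only required to agree with $\textbf{X}$ on the non-\textit{null} entries, so this freedom is available. Letting the value at cell $(i,j)$ equal a free parameter $t$ produces a one-parameter family of repairs $\textbf{X}^r(t)=\textbf{X}^r(0)+t\,\textbf{e}_i\textbf{e}_j^\top$, a rank-one perturbation, where $\textbf{e}_i\in\mathbb{R}^n$ and $\textbf{e}_j\in\mathbb{R}^d$ are standard basis vectors. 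Plugging $\textbf{X}^r(t)$ into the normal equations satisfied by $\textbf{w}^*$, expanding $(\textbf{X}^r(t))^\top\textbf{X}^r(t)\textbf{w}^*$ and $(\textbf{X}^r(t))^\top\textbf{y}$ via the rank-one structure, and subtracting off the $t=0$ normal equations (which $\textbf{w}^*$ also satisfies, since $\textbf{X}^r(0)$ is itself a repair), the quadratic-in-$t$ terms telescope and, after dividing by $t\neq0$, I obtain a vector identity of the form
$w_j^*\,\textbf{a}_i+p_i\,\textbf{e}_j+t\,w_j^*\,\textbf{e}_j=y_i\,\textbf{e}_j$,
where $\textbf{a}_i$ is the $i$-th row of $\textbf{X}^r(0)$ viewed as a column and $p_i=(\textbf{X}^r(0)\textbf{w}^*)_i$. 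The left side is affine in $t$ with slope $w_j^*\textbf{e}_j$, the right side is constant, and the identity must hold for at least two distinct values of $t$; therefore $w_j^*\textbf{e}_j=\textbf{0}$, i.e. $w_j^*=0$, contradicting the assumption. Since $\textbf{z}_j$ was an arbitrary incomplete feature, $w_j^*=0$ for every $j\in MV(\textbf{z})$.

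It then remains to recast $w_j^*=0$ on the incomplete coordinates as the statement that $\textbf{w}^\bullet$ is a certain model. Because $\textbf{w}^*$ has zero coefficients on the incomplete columns, $\textbf{X}^r\textbf{w}^*$ equals $\textbf{X}_c$ times the restriction of $\textbf{w}^*$ to the complete coordinates, for \emph{every} repair; hence the common optimal loss of the certain model $\textbf{w}^*$ is the repair-independent value $\|\textbf{X}_c(\textbf{w}^*\!\restriction_c)-\textbf{y}\|_2^2$. Comparing this with the least-squares optimum over the complete features (every repair's column space contains that of $\textbf{X}_c$, and $\textbf{w}_c^*$ minimizes over the complete features) forces $\|\textbf{X}_c(\textbf{w}^*\!\restriction_c)-\textbf{y}\|_2^2=\|\textbf{X}_c\textbf{w}_c^*-\textbf{y}\|_2^2$, so $\textbf{w}_c^*$ achieves the same loss and thus $\textbf{w}^\bullet$, whose prediction is also $\textbf{X}_c\textbf{w}_c^*$ on every repair, attains the common optimal loss for every repair and is itself a certain model.

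I expect the main obstacle to be conceptual rather than computational: one is tempted to argue "if $w_j^*\neq0$, scale column $j$ to make the residual blow up,'' but this is fallacious because the \emph{minimum} achievable loss also changes as the missing entries move the column space of $\textbf{X}^r$. Anchoring the argument to the always-consistent normal equations is precisely what makes it valid; the rank-one expansion and the extraction of the affine-in-$t$ contradiction are then just bookkeeping. The only fine points to state carefully are that least squares always has a minimizer characterized by the normal equations, and that a repair leaves the remaining missing cells free, which is what permits holding them fixed while sweeping $t$.
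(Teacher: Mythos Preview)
Your proof is correct and follows essentially the same approach as the paper: both arguments exploit that a certain model must satisfy the first-order optimality (normal/gradient) equations for every repair, then vary a single missing cell to force $w_j^*=0$ on each incomplete feature. Your rank-one perturbation bookkeeping and your sandwich argument for why $\textbf{w}^\bullet$ attains the common optimal loss are more explicit and carefully justified than the paper's terse treatment, but the underlying strategy is the same.
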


\begin{proof}

The gradient of linear regression model's loss function is $\nabla L(\textbf{w}) = \frac{2}{n}\sum_{i=1}^{n}(\textbf{w}^T\textbf{x}_i - y_i)\textbf{x}_i$. Since linear regression has a {\it convex} loss function with respect to model $\textbf{w}$, a certain model $\textbf{w}^*$ exists if and only if $\nabla L(\textbf{w}^*) = \textbf{0}$ for all repairs, i.e., 
\begin{equation}\label{eq:2}
     \forall \textbf{X}^r \in \textbf{X}^R, \forall j \in \{1, ..., d\},\frac{2}{n}\sum_{i=1}^{n}(\textbf{w}^{*T}\textbf{x}^r_i - y_i)x^r_{ij} = 0
\end{equation}
\noindent
    For $\textbf{z}_j \in MV(\textbf{z})$, we can split the gradient in Equation \ref{eq:2}: $\forall \textbf{X}^r \in \textbf{X}^R$, 
\begin{equation}\label{eq:4}
     \frac{2}{n}[\sum_{x_{ij} \neq \textit{null}}(\textbf{w}^{*T}\textbf{x}^r_i - y_i)x_{ij} + \sum_{x_{ij} = \textit{null}}(\textbf{w}^{*T}\textbf{x}^r_i - y_i)x_{ij}^r]= 0
\end{equation}
To guarantee that the second summations term in Equation \ref{eq:4} is equal to $0$ regardless of the value of the repair $x_{ij}^r$, $\textbf{w}^{*T}\textbf{x}^r_i - y_i = w^*_1x^r_{i1}+...+w^*_jx^r_{ij}+...+w^*_dx^r_{id} - y_i$ must equal $0$ for all repairs. This leads to a condition of $w^*_j = 0$. Given this, $\textbf{w}^{\bullet}$ is a certain model because it is trained with all complete features. 
\end{proof}

Based on this Lemma~\ref{lemma;linearregression}, we have the following result.

\begin{theorem}
\label{theorem;linearregression}
A certain model exists if and only if $;\forall \textbf{z}_j \in MV(\textbf{z})$, these conditions are met:
1) $\forall x_{ij} = \textit{null}$, $e_i = 0$; 2) 
$\sum_{x_{ij} \neq \textit{null}} x_{ij} \cdot e_i = 0$.
\end{theorem}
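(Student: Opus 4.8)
The plan is to reduce the theorem to a statement about the single candidate model $\textbf{w}^{\bullet}$ and then characterize its optimality through the first-order condition. By Lemma~\ref{lemma;linearregression}, a certain model exists if and only if $\textbf{w}^{\bullet}$ is a certain model, so it suffices to determine exactly when $\textbf{w}^{\bullet}$ minimizes $\|\textbf{X}^{r}\textbf{w}-\textbf{y}\|_2^2$ for every repair $\textbf{X}^{r}\in\textbf{X}^{R}$. The structural observation I will lean on is that $\textbf{w}^{\bullet}$ has zero coefficients on every incomplete feature, so the prediction $\textbf{w}^{\bullet T}\textbf{x}^r_i$ involves only complete features and is therefore the same for all repairs; consequently the residual $e_i=\textbf{w}^{\bullet T}\textbf{x}^r_i-y_i$ is repair-independent and equals the residual of the complete-feature regression defining $\textbf{w}^*_c$. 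This is well-defined even when $\textbf{w}^*_c$ is not unique, since the residual vector is the unique projection of $\textbf{y}$ onto the column space of $\textbf{X}_c$.

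Next I would invoke convexity exactly as in the proof of Lemma~\ref{lemma;linearregression}: $\textbf{w}^{\bullet}$ is optimal for a given repair $\textbf{X}^{r}$ if and only if $\nabla L(\textbf{w}^{\bullet})=\textbf{0}$ at that repair, i.e. $\frac{2}{n}\sum_{i=1}^{n}e_i x^r_{ij}=0$ for every column $j\in\{1,\dots,d\}$. I then split into two cases. For a complete feature $\textbf{z}_j\notin MV(\textbf{z})$, the $j$-th equation reads $\sum_i e_i x_{ij}=0$ and holds automatically and repair-independently, because it is precisely the normal-equation condition satisfied by $\textbf{w}^*_c$ as a minimizer of $\|\textbf{X}_c\textbf{w}-\textbf{y}\|_2^2$. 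For an incomplete feature $\textbf{z}_j\in MV(\textbf{z})$, I separate the rows with an observed entry from the rows with a missing entry, obtaining
\[
\sum_{x_{ij}\neq\textit{null}} e_i x_{ij} \;+\; \sum_{x_{ij}=\textit{null}} e_i x^r_{ij} \;=\; 0 .
\]
Viewed as a function of the free repair values $\{x^r_{ij}:x_{ij}=\textit{null}\}$, the left-hand side is affine with coefficient $e_i$ on $x^r_{ij}$ (note $e_i$ itself does not depend on the repair) and constant term $\sum_{x_{ij}\neq\textit{null}}e_i x_{ij}$; such an affine form vanishes for all choices of those free values if and only if $e_i=0$ for every $i$ with $x_{ij}=\textit{null}$ (condition 1) and $\sum_{x_{ij}\neq\textit{null}}e_i x_{ij}=0$ (condition 2).

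Combining the two cases delivers both directions simultaneously: $\textbf{w}^{\bullet}$ is optimal for every repair if and only if, for each incomplete feature, conditions (1) and (2) hold, which together with Lemma~\ref{lemma;linearregression} is exactly the claimed characterization. I do not anticipate a deep obstacle, since the argument is essentially bookkeeping on the gradient equations; the step requiring the most care is the ``vary the repair independently'' argument — one must ensure the missing entries genuinely range freely over $\mathbb{R}$, so that an affine form vanishing identically forces every coefficient to vanish, and one must check that when a single row carries missing values in several incomplete columns, the resulting demands (all of which force $e_i=0$ for that row) are mutually consistent rather than contradictory. Phrasing the conditions per incomplete feature $\textbf{z}_j$, as the theorem does, makes this consistency automatic.
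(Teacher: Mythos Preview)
Your proof is correct and follows essentially the same approach as the paper: reduce via Lemma~\ref{lemma;linearregression} to checking when $\textbf{w}^{\bullet}$ is optimal for every repair, use the first-order (inner-product) condition column by column, observe that the complete-feature columns are handled automatically by the normal equations for $\textbf{w}_c^*$, and for each incomplete column split into observed and missing rows. The only difference is packaging: you derive both directions at once by arguing that an affine form in the free repair values vanishes identically iff all its coefficients and its constant term vanish, whereas the paper treats the two implications separately and handles one by contradiction.
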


\begin{proof}

To reduce notations, we reformulate the gradient in Equation \ref{eq:2} to the form of the inner product: 
\begin{equation}\label{eq:3}
    \forall \textbf{X}^r \in \textbf{X}^R, \forall j \in \{1, ..., d\}, \langle\textbf{X}^r\textbf{w}^* - \textbf{y}, \textbf{z}^r_j\rangle = 0
\end{equation}

{\it First, we prove the necessity}. For all complete features $\textbf{z}_j \notin MV(\textbf{z})$, $\langle\textbf{X}_c\textbf{w}_c^* - \textbf{y}, \textbf{z}_j\rangle = 0$ holds from the definition of $\textbf{w}_c^*$. Since residue $\textbf{e}$ = $\textbf{X}^r\textbf{w}^{\bullet} - \textbf{y} = \textbf{X}_c\textbf{w}_c^* - \textbf{y}$, it is trivial that $\forall \textbf{z}_j \notin MV(\textbf{z}), \forall \textbf{X}^r \in \textbf{X}^R, \langle\textbf{X}^r\textbf{w}^{\bullet} - \textbf{y}, \textbf{z}^r_j\rangle = \langle\textbf{r}, \textbf{z}^r_j\rangle = 0$ because $\textbf{e}$ is orthogonal to all complete features that are used to train $\textbf{w}_c^*$. From the two conditions in Theorem~\ref{theorem;linearregression}, we find: $\forall \textbf{z}_j\in MV(\textbf{z}), \forall \textbf{X}^r \in \textbf{X}^R, \langle\textbf{X}^r\textbf{w}^{\bullet} - \textbf{y}, \textbf{z}^r_j\rangle = 0$. Hence, Equation \ref{eq:3} holds, justifying the necessity.

\textit{Then, we prove sufficiency by contradiction}. Assume that a certain model exists, but at least one condition in Theorem~\ref{theorem;linearregression} is not satisfied. Based on Lemma \ref{lemma;linearregression}, $\textbf{w}^{\bullet}$ is the certain model where $\forall \textbf{z}_j \in MV(\textbf{z}), w_j^{\bullet} = 0$. However, breaking either condition in Theorem 5.1 leads to the existence of a repair to an incomplete feature $\textbf{z}^{r1}_j, \textbf{z}_j\in MV(\textbf{z})$ such that $\langle \textbf{r}, \textbf{z}^{r1}_j\rangle = \langle\textbf{X}^r\textbf{w}^{\bullet} - \textbf{y}, \textbf{z}^r_j\rangle \neq 0$. Because the feature repair $\textbf{z}^{r1}_j$ is not orthogonal to the fitting residue $\textbf{e}$, we may find a model other than $\textbf{w}^{\bullet}$ whose training loss is smaller than $||\textbf{r}||_2$. In other words, $\textbf{w}^{\bullet}$ is not an optimal model with respect to the repair $\textbf{X}^{r1}$. This violates the definition that a certain model is optimal for all repairs. As a result, the sufficiency of Theorem~\ref{theorem;linearregression} holds through the contradiction. 
\end{proof}

\subsection{Checking and Learning Certain Models}
Theorem~\ref{theorem;linearregression} says that a certain model exists for linear regression if and only if the residue vector $\textbf{e}$ is orthogonal to incomplete features. If a certain model exists, the incomplete features may be safely ignored without compromising the minimization of training loss since they do not contribute to a smaller training loss than $\textbf{e}$. 


Based on Theorem~\ref{theorem;linearregression}, we present Algorithm \autoref{alg:linear_regression}. Our algorithm has two major steps: 1) computing the residue vector $\textbf{e}$ along with expanded model $\textbf{w}^{\bullet}$ based on complete features, and 2) checking the orthogonality between $\textbf{e}$ and all incomplete features. Finally, we obtain a certain model when it exists by getting $\textbf{w}^{\bullet}$, in which the incomplete features are ignored by the zero linear coefficients.

The algorithm's time complexity is $\mathcal{O}(T_{train})$, which is significantly faster than the baseline we discuss in Section \ref{sec:certain-models}. The efficiency of our algorithm stems from its ability to check for certain models without traversing over all possible repairs.

\begin{algorithm}
\caption{Checking and learning certain model for Linear Regression}\label{alg:linear_regression}

\begin{algorithmic}
\State $MV(\textbf{z}) \gets \text{features with missing values (incomplete features)}$
\State $ \textbf{w}^{\bullet} \gets \text{expanded model trained with complete features}$
\State $ \textbf{e} \gets \text{fitting residue with complete features}$
\State $ n \gets \text{the number of training examples}$
\For {$\textbf{z}_j \in MV(\textbf{z})$}
\State $innerProduct \gets 0$
\For{$i=1,2,\ldots,n$}
	\If{$x_{ij} = \textit{null}$ AND $e_i \neq 0$}
    \State \Return \text{"Certain model does not exist"}
    \ElsIf{$x_{ij} \neq \textit{null}$}
    \State $innerProduct \gets innerProduct + x_{ij}*e_i$
    \EndIf
\EndFor
\If{$innerProduct \neq 0$}
    \State \Return \text{"Certain model does not exist"}
    \EndIf
\EndFor
\State \Return \text{"A certain model $\textbf{w}^{\bullet}$ exists"}
\end{algorithmic}
\end{algorithm}

\section{Certain Models for SVM}\label{sec:linear-svm}

Another widely used ML model is SVM. In this section, we are specifically interested in {\it linear} SVM, which aims to learn a linear decision boundary to classify examples. This decision boundary is of the form $\textbf{w}^T\textbf{x} = 0$.

A typical soft-margin SVM's loss function comprises of a loss term and a regularizer, $L(f(\textbf{X}, \textbf{w}), \textbf{y}) = \frac{1}{2}||\textbf{w}||^2_2 + C\sum_{i=1}^{n}max\{0, 1-y_i\textbf{w}^T\textbf{x}_i\}$. Here, the first term is the regularization, the second term is the {\it hinge loss}~\cite{gentile1998hingeloss}, and $C \in (0, +\infty)$ is a tunable parameter. {\bf Support vectors} are the closest training examples that decide a decision boundary, i.e. $(\textbf{x}_i, y_i)$ is a support vector if $y_i\textbf{w}^T\textbf{x}_i \leq 1$.

\subsection{Conditions For Having Certain Models}
Similar to the definition in Subsection \ref{subsec:problem-formulation-linear},\textbf{certain model, $\textbf{w}^*$, for SVM} is defined as: 
\begin{equation}
\label{equation:svm_problem_formulation}
    \forall \textbf{X}^{r} \in \textbf{X}^{R}, \textbf{w}^* =  \mathop{\arg \min}\limits_{\textbf{w}\in \mathcal{W}} [\frac{1}{2}||\textbf{w}||^2_2 + C\sum_{i=1}^{n}max\{0, 1-y_i\textbf{w}^T\textbf{x}_i\}]
\end{equation}
\noindent
\textit{Where $\mathbf{X}^r$ denotes one possible repair, and $\mathbf{X}^R$ is the set of all possible repairs. $\mathbf{x}_i$ is an input example with $d$ features, and $y_i$ is its corresponding label.  $\textbf{w}^T\textbf{x}_i$ comes from the target function and measures the proximity between the example $\textbf{x}_i$ and the decision boundary}
\vspace{2mm}

An SVM leverages support vectors to construct a decision boundary for classifying examples. Therefore, the existence of a certain model for an SVM implies that incomplete examples are not support vectors in any repairs (Lemma \ref{lemma:linearsvm1}). 

To formalize this intuition, we present Theorem \ref{theorem:linearsvm} to check for certain models. Similar to the notations used in Subsection \ref{subsec:problem-formulation-linear}, we denote a {\bf c}omplete matrix $\textbf{X}_c$ as a submatrix of $\textbf{X}$ that consists of all the {\bf c}omplete examples $\textbf{x}_i, \textbf{x}_i \notin MV(\mathbf{x})$. Similarly, we define a subvector $\textbf{y}_c$ to include all labels corresponding to these complete training examples. We denote the SVM model trained with these complete examples and labels as $\textbf{w}^{\diamond} = [w^{\diamond}_1, ..., w^{\diamond}_d]^T$.

\begin{lemma}\label{lemma:linearsvm1}
If a certain model $\textbf{w}^*$ exists, there are only two possible cases and they do not have any overlap. Case 1: none of the incomplete examples is a support vector with respect to $\textbf{w}^*$ in any repair, i.e., $\forall \textbf{X}^r \in \textbf{X}^R, \forall \textbf{x}_i \in MV(\mathbf{x})$, $y_i\textbf{w}^{*T}\textbf{x}_i^r > 1$. Case 2: $\exists \textbf{x}_i \in MV(\mathbf{x})$, $y_i\textbf{w}^{*T}\textbf{x}_i^r = 1$. Also, $\forall \textbf{z}_j \in MV(\mathbf{z}), w^*_j = 0$. And $\forall \textbf{x}_i \in MV(\mathbf{x})$, $y_i\textbf{w}^{*T}\textbf{x}_i^r \geq 1$.
\end{lemma}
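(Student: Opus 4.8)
The plan is to replace the ``for all repairs'' condition by the first-order (subgradient) optimality condition of the strictly convex SVM objective, and then to exploit that a missing entry of an incomplete example can be driven to $\pm\infty$: along such a one-parameter family of repairs a term in the stationarity equation blows up unless the corresponding coefficients vanish, which is exactly what pins down the claimed structure. Since $L(\textbf{w})=\tfrac12\|\textbf{w}\|_2^2+C\sum_i\max\{0,1-y_i\textbf{w}^T\textbf{x}_i\}$ is strictly convex, for a fixed repair $\textbf{X}^r$ a model $\textbf{w}^*$ is optimal if and only if $\textbf{0}\in\partial L(\textbf{w}^*)$, i.e.\ there exist $\beta_1,\dots,\beta_n\in[0,1]$ with $\textbf{w}^*=C\sum_{i=1}^n\beta_i y_i\textbf{x}_i^r$, where $\beta_i=1$ whenever $y_i\textbf{w}^{*T}\textbf{x}_i^r<1$ and $\beta_i=0$ whenever $y_i\textbf{w}^{*T}\textbf{x}_i^r>1$. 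If $\textbf{w}^*$ is a certain model this holds for every $\textbf{X}^r\in\textbf{X}^R$; the simplification is that it is enough to test one-parameter subfamilies of repairs, not all of $\textbf{X}^R$ simultaneously.

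\noindent\textbf{Step 1: incomplete features carry zero weight.} I would first show $w_j^*=0$ for every $\textbf{z}_j\in MV(\mathbf{z})$. If not, choose $i$ with $x_{ij}=\textit{null}$ and $w_j^*\neq0$, fix any repair, and vary only the entry $x_{ij}^r=t$. Then $y_i\textbf{w}^{*T}\textbf{x}_i^r$ is a non-constant affine function of $t$, so along one of the directions $t\to\pm\infty$ it tends to $-\infty$; on that ray $\beta_i$ is forced to $1$. Reading off coordinate $j$ of the stationarity equation gives $w_j^*=C\beta_i y_i t+C\sum_{i'\neq i}\beta_{i'}y_{i'}x_{i'j}^r$; the sum over $i'\neq i$ stays bounded (finitely many fixed entries, coefficients in $[0,1]$) while $C\beta_i y_i t=Cy_i t$ is unbounded, contradicting that $w_j^*$ is a constant. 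Hence $w_j^*=0$ on all incomplete features, and consequently $y_i\textbf{w}^{*T}\textbf{x}_i^r$ does not depend on the repair for any incomplete example $\textbf{x}_i$; denote this constant $c_i$.

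\noindent\textbf{Step 2 and conclusion.} Next I would rule out $c_i<1$. If $c_i<1$ for some incomplete $\textbf{x}_i$ then $\beta_i=1$ in every repair; varying a missing entry $x_{ij}^r=t$ of $\textbf{x}_i$, coordinate $j$ of stationarity becomes $0=w_j^*=Cy_i t+C\sum_{i'\neq i}\beta_{i'}y_{i'}x_{i'j}^r$, whose right-hand side is unbounded in $t$ while the left-hand side is $0$ --- contradiction. Thus $c_i=y_i\textbf{w}^{*T}\textbf{x}_i^r\geq1$ for every incomplete example in every repair. Now exactly one of two situations occurs: either $c_i>1$ for all incomplete examples, which is Case~1 (no incomplete example is a support vector in any repair); or $c_i=1$ for some incomplete example, which is Case~2, and Steps~1--2 have already established $w_j^*=0$ for all $\textbf{z}_j\in MV(\mathbf{z})$ and $y_i\textbf{w}^{*T}\textbf{x}_i^r\geq1$ for all incomplete examples. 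These cases are plainly disjoint, which finishes the argument.

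\noindent\textbf{Expected main difficulty.} The delicate part is the subgradient bookkeeping at the margin, where the hinge loss is nonsmooth: one must argue carefully that along an unbounded one-parameter family of repairs the active coefficient $\beta_i$ cannot shrink fast enough to keep the stationarity equation balanced. Isolating a single missing coordinate of the optimality condition and sending that entry to infinity is the device that makes this clean; the remaining case analysis is routine.
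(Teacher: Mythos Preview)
Your proof is correct and uses essentially the same device as the paper: the subgradient optimality condition of the strictly convex SVM loss together with sending a single missing entry to $\pm\infty$ so that one coordinate of the stationarity equation blows up while the remaining terms stay bounded. Your organization is slightly cleaner in that you first establish $w_j^*=0$ for every incomplete feature (the paper only asserts this in Case~2 but implicitly needs it in both), which makes each margin $y_i\textbf{w}^{*T}\textbf{x}_i^r$ a repair-independent constant $c_i$ and reduces the remaining case split to a triviality.
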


\begin{proof}
We prove the lemma by contradiction. The \textit{sub-gradient of the loss function for SVM} is:
\[
    \frac{\partial L(f(\textbf{X}, \textbf{w}), \textbf{y})}{\partial \textbf{w}} =  \textbf{w} + C\cdot[\sum_{\{\textbf{x}_p|  y_p\textbf{w}^T\textbf{x}_p < 1\}}-y_p\textbf{x}_p + \sum_{\{\textbf{x}_p|  y_p\textbf{w}^T\textbf{x}_p = 1\}}- \alpha_p y_p\textbf{x}_p]
\]
where $0 \leq \alpha_p \leq 1$ are constrained parameters corresponding to the non-differentiable point in the hinge loss. SVM has a unique optimal model $\textbf{w}^*$. Therefore, a certain model exists if and only if the sub-gradient equals zero in all repairs: $ \exists 0 \leq \alpha_p \leq 1$,
\begin{equation}\label{eq:5}
  \forall \textbf{X}^r \in \textbf{X}^R, \textbf{w}^* + C\cdot[\sum_{\{\textbf{x}^r_p|  y_p\textbf{w}^{*T}\textbf{x}^r_p < 1\}}-y_p\textbf{x}^r_p + \sum_{\{\textbf{x}^r_p|  y_p\textbf{w}^{*T}\textbf{x}^r_p = 1\}}- \alpha_p y_p\textbf{x}^r_p] = \textbf{0} 
\end{equation}
Now, assume that we have neither Case 1 nor Case 2 described in Lemma \ref{lemma:linearsvm1}, but a certain model exists. To construct this main assumption, we need two separate sets of sub-assumptions. Sub-assumption 1: a certain model $\textbf{w}^*$ exists, and there is a repair $\textbf{X}^{r1}$ such that $\exists \textbf{x}_i \in MV(\mathbf{x})$, $y_i\textbf{w}^{*T}\textbf{x}_i^{r1} < 1$. Sub-assumption 2: a certain model $\textbf{w}^*$ exists where $\exists \textbf{z}_j \in MV(\mathbf{z}), w^*_j \neq 0$. $\forall \textbf{x}_i \in MV(\mathbf{x})$, $y_i\textbf{w}^{*T}\textbf{x}_i^r \geq 1$, and $\exists \textbf{x}_i \in MV(\mathbf{x})$, $y_i\textbf{w}^{*T}\textbf{x}_i^r = 1$. The goal is to show both sub-assumptions contradict and thus the main assumption contradicts.

Starting with sub-assumption 1, suppose ${j}^{th}$ feature value $x_{ij} = null$ in the incomplete training example $\textbf{x}_i$, one may easily find another repair $\textbf{X}^{r2}$ such that: 1)
$y_i\textbf{w}^{*T}\textbf{x}_i^{r2} < 1$, and 2) the only difference between $\textbf{X}^{r1}$ and $ \textbf{X}^{r2}$ is the repairing value to $x_{ij}$, i.e., $x_{ij}^{r1} \neq x_{ij}^{r2}$, and 3) $\nexists 0 \leq \alpha \leq 1$ that satisfy Equation \ref{eq:5} with respect to $ \textbf{X}^{r2}$. This is because the repairing value to $x_{ij}$ may be any value within $[-\infty, +\infty]$ while the last term in Equation \ref{eq:5}, $\sum_{\{\textbf{x}^r_p|  y_p\textbf{w}^{*T}\textbf{x}^r_p = 1\}}- \alpha_p y_p\textbf{x}^r_p$, is bounded by finite numbers since $0 \leq \alpha \leq 1$. Therefore, $\mathbf{w}^*$ is not an optimal model for $\textbf{X}^{r2}$, contradicting the sub-assumption 1.

Moving to sub-assumption 2, $\exists \textbf{z}_j \in MV(\mathbf{z}), w^*_j \neq 0$. Similar to the proof for sub-assumption 1, we can always find a repair such that no $0 \leq \alpha \leq 1$ satisfies Equation \ref{eq:5}. This is again because the term $\sum_{\{\textbf{x}_p|  y_p\textbf{w}^T\textbf{x}_p < 1\}}-y_p\textbf{x}_p$, is unbounded from the arbitrary repairing value, while the term $\sum_{\{\textbf{x}^r_p|  y_p\textbf{w}^{*T}\textbf{x}^r_p = 1\}}- \alpha_p y_p\textbf{x}^r_p$, is bounded by finite numbers since $0 \leq \alpha \leq 1$.

As a result, the proof is complete from the contradicting assumptions.


\end{proof}

\begin{lemma}\label{lemma:linearsvm2}
If a certain model exists by Case 1 in Lemma \ref{lemma:linearsvm1}, $\textbf{w}^{\diamond}$ is the certain model.
\end{lemma}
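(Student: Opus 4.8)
The plan is to show that the certain model whose existence is hypothesized, call it $\textbf{w}^*$, must coincide with $\textbf{w}^{\diamond}$, the SVM model trained on the complete examples $(\textbf{X}_c,\textbf{y}_c)$; once that identity is established the lemma is immediate, since $\textbf{w}^*$ is optimal for every repair by assumption and hence so is $\textbf{w}^{\diamond}$.

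First I would exploit the defining property of Case 1: for every repair $\textbf{X}^r$ and every incomplete example $\textbf{x}_i \in MV(\mathbf{x})$ we have $y_i\textbf{w}^{*T}\textbf{x}_i^r > 1$, so each incomplete example lies strictly outside the margin, contributes $0$ to the hinge loss, and in particular never appears in either of the two index sets $\{\textbf{x}^r_p \mid y_p\textbf{w}^{*T}\textbf{x}^r_p < 1\}$ or $\{\textbf{x}^r_p \mid y_p\textbf{w}^{*T}\textbf{x}^r_p = 1\}$ that define the sub-gradient in Equation \ref{eq:5}. Hence, for any repair, the stationarity condition certifying $\textbf{w}^*$ as the SVM optimum collapses to a sum over complete examples only, which is exactly the first-order optimality (KKT) condition for the soft-margin SVM objective built from $(\textbf{X}_c,\textbf{y}_c)$. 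Thus $\textbf{w}^*$ is a minimizer of that complete-data objective.

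Next I would invoke the fact, already used in the proof of Lemma \ref{lemma:linearsvm1}, that the soft-margin SVM objective is strongly convex in $\textbf{w}$ (owing to the $\tfrac{1}{2}\|\textbf{w}\|_2^2$ term) and therefore has a unique minimizer. Applying this to the complete-data objective, whose unique minimizer is by definition $\textbf{w}^{\diamond}$, forces $\textbf{w}^* = \textbf{w}^{\diamond}$. Since $\textbf{w}^*$ satisfies Definition \ref{definition:certain-model}, so does $\textbf{w}^{\diamond}$, which establishes the claim.

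I expect the main delicacy to be the bookkeeping around the non-differentiable kink of the hinge loss: one must check that the sub-gradient parameters $\alpha_p \in [0,1]$ witnessing optimality of $\textbf{w}^*$ over a given repair are precisely the ones witnessing optimality of $\textbf{w}^{\diamond}$ for the complete-data problem, and that dropping the incomplete examples from the index sets does not change which of the remaining examples sit exactly on the margin. Case 1's strict inequality $y_i\textbf{w}^{*T}\textbf{x}_i^r > 1$ is what makes this clean, since no incomplete example is ever a boundary case, but the write-up should state this explicitly rather than leave it implicit.
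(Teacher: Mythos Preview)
Your proposal is correct and follows essentially the same approach as the paper. The paper's proof is terser: it simply cites the well-known SVM property that removing non-support-vectors does not change the optimal model, whereas you unpack that property explicitly via the sub-gradient (Equation~\ref{eq:5}) and then invoke uniqueness from the strong convexity of $\tfrac{1}{2}\|\textbf{w}\|_2^2$; both arguments hinge on the same two facts (Case~1 keeps incomplete examples strictly outside the margin, and the SVM optimum is unique), so the logical content is identical.
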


\begin{proof}
Based on Lemma \ref{lemma:linearsvm1}, if a certain model exists, none of the incomplete examples are support vectors with respect to $\textbf{w}^*$ in any repair. As a nice property of SVM, {\it removing examples that are not support vectors does not change the optimal SVM model}. Therefore, the model trained without incomplete examples is also the optimal model with respect to the full training set. As a result, when a certain model exists, $\textbf{w}^{\diamond}$ is the certain model. 
\end{proof}

\begin{lemma}\label{lemma:linearsvm3}
If a certain model exists by Case 2 in Lemma \ref{lemma:linearsvm1}, models trained with any repairs of $\mathbf{X}$ are certain models.
\end{lemma}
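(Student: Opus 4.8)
The plan is to obtain the statement almost immediately from Definition~\ref{definition:certain-model} together with the uniqueness of the SVM minimizer, so the argument is short. Fix an arbitrary repair $\textbf{X}^{r_0}\in\textbf{X}^R$ and let $\widehat{\textbf{w}}=\arg\min_{\textbf{w}\in\mathcal{W}} L(f(\textbf{X}^{r_0},\textbf{w}),\textbf{y})$ be the model obtained by training soft-margin SVM on this repair. The goal is to show that $\widehat{\textbf{w}}$ is itself a certain model, i.e.\ that it minimizes the SVM loss for \emph{every} repair, not just for $\textbf{X}^{r_0}$.

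First I would record the bookkeeping that makes $\widehat{\textbf{w}}$ well defined: the loss $\frac12\|\textbf{w}\|_2^2+C\sum_{i}\max\{0,1-y_i\textbf{w}^T\textbf{x}_i\}$ is strictly convex in $\textbf{w}$ (the quadratic regularizer is strictly convex, each hinge term is convex, and their sum is therefore strictly convex) and it is coercive, so for any fixed data matrix it has a unique global minimizer. This is exactly the ``SVM has a unique optimal model'' fact already invoked in the proof of Lemma~\ref{lemma:linearsvm1}; here I only need that it applies to every repair $\textbf{X}^r$, so that ``the model trained with repair $\textbf{X}^r$'' denotes a single well-defined vector.

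Then I would invoke the hypothesis. By assumption a certain model $\textbf{w}^*$ exists (in the Case~2 form of Lemma~\ref{lemma:linearsvm1}), so by Definition~\ref{definition:certain-model} $\textbf{w}^*$ minimizes $L(f(\textbf{X}^r,\textbf{w}),\textbf{y})$ for \emph{all} repairs $\textbf{X}^r$ --- in particular for the very repair $\textbf{X}^{r_0}$ we trained on. But $\widehat{\textbf{w}}$ is also a minimizer of $L(f(\textbf{X}^{r_0},\textbf{w}),\textbf{y})$ by construction, and that minimizer is unique, hence $\widehat{\textbf{w}}=\textbf{w}^*$. Therefore $\widehat{\textbf{w}}$ is a certain model, and since $\textbf{X}^{r_0}$ was an arbitrary repair, every model trained on a repair of $\textbf{X}$ is a certain model.

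I do not expect a genuine obstacle here; the two points that need care are (i) applying the definition of a certain model to the very repair on which the model is trained, and (ii) citing strict convexity so that the trained model is \emph{forced} to coincide with $\textbf{w}^*$ rather than merely sharing its optimal loss value. I would also add one sentence of motivation contrasting this with Lemma~\ref{lemma:linearsvm2}: in Case~2 some incomplete example lies exactly on the margin ($y_i\textbf{w}^{*T}\textbf{x}_i^r=1$), so --- unlike Case~1 --- one cannot simply discard the incomplete examples, since removing a margin support vector may move the SVM optimum; this is why the usable prescription here is ``train on any repair'', which in turn justifies, for the algorithm, imputing arbitrary (e.g.\ zero) values before training.
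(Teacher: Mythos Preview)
Your argument is correct and is a genuinely different---and cleaner---route than the paper's. The paper's proof works through the structure of Case~2: each incomplete example either lies exactly on the margin $y_i\textbf{w}^{*T}\textbf{x}_i^r=1$ (where the slack variables $\alpha_i$ can absorb any repair) or is never a support vector, and in both scenarios the subgradient optimality condition (Equation~\ref{eq:5}) is satisfied by the same $\textbf{w}^*$ regardless of the repair. You bypass all of that structural analysis: since $\textbf{w}^*$ minimizes the loss for \emph{every} repair by Definition~\ref{definition:certain-model}, and each repair has a \emph{unique} minimizer by strict convexity, training on any repair must return exactly $\textbf{w}^*$. One remark worth making explicit in your write-up is that your proof never uses the Case~2 hypothesis---it proves the stronger statement that whenever a certain model exists at all, training on any repair recovers it. The paper's approach, by contrast, explains mechanistically \emph{why} Case~2 behaves this way, which is useful context for the algorithm, but as a proof of the lemma as stated your uniqueness argument is both shorter and more general.
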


\begin{proof}
Based on Case 2 in Lemma \ref{lemma:linearsvm1}, when a certain model exists, every incomplete example $\mathbf{x}_i$ is either a support vector always standing exactly at the boundary of $y_i\textbf{w}^{*T}\textbf{x}_i = 1$, or never a support vector. In the first scenario, for any repair, we can always find a set of slackness variables $\alpha_i$ such that the optimality condition in Equation \ref{eq:5} holds for a single $\mathbf{w}^*$. Similarly, in the second scenario, if an incomplete example is never a support vector in any repair, training with any repairs leads to an identical model. Therefore, models trained with any repairs of $\mathbf{X}$ are certain models.
\end{proof}

\begin{theorem}
\label{theorem:linearsvm}
    A certain model exists if and only if one of the two sets of conditions below is met. Set 1: 1) $\forall \textbf{z}_j \in MV(\mathbf{z}), w^{\diamond}_j = 0$, 
    2) $\forall \textbf{x}_i \in MV(\mathbf{x})$, $y_i\sum_{x_{ij} \neq \textit{null}}$$ w^{\diamond}_j x_{ij} > 1$. Set 2: 1) training a model $\mathbf{w}'$ with a random repair $\mathbf{X}^{r'} \in \mathbf{X}^R$, $\forall \textbf{z}_j \in MV(\mathbf{z}), w'_j = 0$, 
    2) $\forall \textbf{x}_i \in MV(\mathbf{x})$, $y_i\sum_{x_{ij} \neq \textit{null}}$$ w'_j x_{ij} \geq 1$. 
\end{theorem}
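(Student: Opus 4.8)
The plan is to prove the two directions by reducing, in each, to the dichotomy of Lemma~\ref{lemma:linearsvm1}: Case~1 will line up with condition Set~1 and Case~2 with Set~2, and the disjointness of the two cases asserted there is what makes the ``one of the two sets'' phrasing well posed. For \emph{sufficiency}, first suppose Set~1 holds. Condition~(1), $w^{\diamond}_j = 0$ for every $\mathbf{z}_j \in MV(\mathbf{z})$, makes $\mathbf{w}^{\diamond T}\mathbf{x}_i^r = \sum_{x_{ij}\neq\textit{null}} w^{\diamond}_j x_{ij}$ independent of the repair, so condition~(2) says precisely that $y_i \mathbf{w}^{\diamond T}\mathbf{x}_i^r > 1$ for every incomplete example in every repair, i.e.\ no incomplete example is ever a support vector with respect to $\mathbf{w}^{\diamond}$. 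Using the same property of SVM invoked in Lemma~\ref{lemma:linearsvm2} --- removing non-support-vector examples does not change the optimum --- together with the fact that the repaired incomplete examples then contribute nothing to the sub-gradient in Equation~\ref{eq:5}, $\mathbf{w}^{\diamond}$ satisfies the optimality condition for every repair and is therefore a certain model. If instead Set~2 holds, condition~(1) again makes $\mathbf{w}'^{T}\mathbf{x}_i^r = \sum_{x_{ij}\neq\textit{null}} w'_j x_{ij}$ repair-independent and condition~(2) gives $y_i\mathbf{w}'^{T}\mathbf{x}_i^r \geq 1$ for every incomplete example in every repair, so each incomplete example is always exactly on the margin or never a support vector, and it then suffices to exhibit, for each repair, slack variables $\alpha_p\in[0,1]$ making Equation~\ref{eq:5} hold with the single model $\mathbf{w}'$ --- the argument of Lemma~\ref{lemma:linearsvm3} --- whence $\mathbf{w}'$ is a certain model.

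For \emph{necessity}, assume a certain model $\mathbf{w}^*$ exists; by Lemma~\ref{lemma:linearsvm1} exactly one of Case~1, Case~2 holds. In Case~1, Lemma~\ref{lemma:linearsvm2} gives $\mathbf{w}^* = \mathbf{w}^{\diamond}$, and since $y_i\mathbf{w}^{\diamond T}\mathbf{x}_i^r > 1$ must hold for every incomplete example in every repair, the coefficient of each missing feature appearing in that example must be $0$ --- otherwise letting its repairing value run over $\mathbb{R}$ would push $y_i\mathbf{w}^{\diamond T}\mathbf{x}_i^r$ below $1$ --- which is Set~1 condition~(1), and substituting it back yields condition~(2). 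In Case~2, Lemma~\ref{lemma:linearsvm3} says the model $\mathbf{w}'$ trained on the random repair $\mathbf{X}^{r'}$ is itself a certain model; since each repair admits a unique SVM optimum, the certain model is unique and $\mathbf{w}' = \mathbf{w}^*$, so the properties listed in Case~2 of Lemma~\ref{lemma:linearsvm1} ($w^*_j = 0$ on missing features and $y_i\mathbf{w}^{*T}\mathbf{x}_i^r \geq 1$ for every incomplete example) translate into Set~2 conditions~(1) and~(2) once $\mathbf{w}'^{T}\mathbf{x}_i^r$ is rewritten as $\sum_{x_{ij}\neq\textit{null}} w'_j x_{ij}$.

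The step I expect to be the main obstacle is the sufficiency of Set~2 in the sub-case where some incomplete example sits exactly on the margin for the chosen repair $\mathbf{X}^{r'}$, because here --- unlike in Lemma~\ref{lemma:linearsvm3}, which may assume a certain model already exists --- we must \emph{establish} that the single model $\mathbf{w}'$ stays optimal as the repairing value of each missing coordinate ranges over all of $\mathbb{R}$. This needs a careful sub-gradient argument in the spirit of the unboundedness reasoning in Lemma~\ref{lemma:linearsvm1}: the total contribution of the complete examples to each sub-gradient coordinate indexed by a missing feature must vanish (otherwise driving a repairing value toward $0$ would force a slack variable outside $[0,1]$), and only then can the slack variables of the on-margin incomplete examples be set to $0$, which decouples the repairs. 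Carrying this out simultaneously for several incomplete examples and several missing coordinates while keeping every slack variable in $[0,1]$ is the real work; everything else follows mechanically from Lemmas~\ref{lemma:linearsvm1}--\ref{lemma:linearsvm3} and the uniqueness of the SVM optimum.
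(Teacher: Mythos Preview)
Your proposal is correct and follows essentially the same route as the paper: split along the two cases of Lemma~\ref{lemma:linearsvm1}, use Lemma~\ref{lemma:linearsvm2} to identify the certain model with $\mathbf{w}^{\diamond}$ in Case~1 and Lemma~\ref{lemma:linearsvm3} plus uniqueness to identify it with $\mathbf{w}'$ in Case~2, and in each direction reduce the stated conditions to the repair-independence of $y_i\mathbf{w}^T\mathbf{x}_i^r$ forced by the vanishing of the coefficients on missing features. Your flagging of the Set~2 sufficiency as the delicate sub-gradient step is apt and in fact more explicit than the paper, which disposes of that case with a one-line ``similarly, $\mathbf{w}'$ is optimal for all repairs based on Equation~\ref{eq:5}.''
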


\begin{proof}
We prove the necessity first. When conditions in Set 1 are satisfied, since $\textbf{w}^{\diamond}$ is the optimal model trained by removing incomplete examples, 
\begin{equation}\label{eq:8}
   \textbf{w}^{\diamond} + C[\sum_{\substack{\{\textbf{x}_i|  y_i\textbf{w}^{\diamond  T}\textbf{x}_i < 1, \\ \textbf{x}_i\notin MV(\mathbf{x})\}}}-y_i\textbf{x}_i + \sum_{\substack{\{\textbf{x}_i|  y_i\textbf{w}^{\diamond  T}\textbf{x}_i = 1, \\ \textbf{x}_i\notin MV(\mathbf{x})\}}}- \alpha_i y_i\textbf{x}_i]= \textbf{0}   
\end{equation}

Combining the two conditions in Set 1, we get  $\forall \textbf{x}_i\in MV(\mathbf{x}), \forall \textbf{X}^r \in \textbf{X}^R, y_i\textbf{w}^{\diamond}\textbf{x}^r_i > 1 $. Hence, the incomplete training examples are not support vectors with respect to $\textbf{w}^{\diamond}$ in any repair. Therefore, from Equation \ref{eq:8}, we have:
\[
  \forall \textbf{X}^r \in \textbf{X}^R,    \textbf{w}^{\diamond} + C[\sum_{\textbf{x}_i|  y_i\textbf{w}^{\diamond  T}\textbf{x}_i < 1}-y_i\textbf{x}_i + \sum_{\textbf{x}_i|  y_i\textbf{w}^{\diamond  T}\textbf{x}_i = 1}- \alpha_i y_i\textbf{x}_i] = \textbf{0}
\]
 Based on the previous discussion about Equation \ref{eq:5}, $\textbf{w}^{\diamond}$ is a certain model. When conditions in Set 2 hold, similarly, $\mathbf{w}'$ is optimal for all repairs based on Equation \ref{eq:5}. Hence, the necessity holds. 
 
We prove sufficiency by contradiction. For condition Set 1, assume that a certain model exists but at least one condition does not hold. Based on Lemma \ref{lemma:linearsvm2}, $\textbf{w}^{\diamond}$ is the certain model. Further, from Case 1 in Lemma \ref{lemma:linearsvm1}, $\forall \textbf{X}^r \in \textbf{X}^R, \forall \textbf{x}_i \in MV(\mathbf{x})$, $y_i\textbf{w}^{\diamond T}\textbf{x}_i^r > 1$. If condition 2) does not hold, i.e. $\exists \textbf{x}_p \in MV(\mathbf{x})$, $y_p\sum_{x_{pj} \neq \textit{null}}$$ w^{\diamond}_j x_{pj} \leq 1$. One may easily find a repair $\textbf{X}^{r1}$ such that $y_p\sum_{j = 1}^{d} w^{\diamond}_j x^{r1}_{pj} = y_p\textbf{w}^{\diamond T}\textbf{x}_p^{r1} \leq 1$. This contradicts Lemma \ref{lemma:linearsvm1}. Instead, if condition 1) does not hold, $\exists \textbf{z}_q \in MV(\mathbf{z}), w^{\diamond}_q \neq 0$. Suppose $x_{sq} = \textit{null}$ is the missing value in feature $\textbf{z}_q$, one may always find another repair $\textbf{X}^{r2}$ such that $y_s\textbf{w}^{\diamond T}\textbf{x}_s^{r2} = $ $y_sw^{\diamond}_1x_{s1}^{r2} + ... + y_sw^{\diamond}_qx_{sq}^{r2} + ... +y_sw^{\diamond}_dx_{sd}^{r2}\leq 1$. This is because the term $y_sw^{\diamond}_qx_{sq}^{r2}$ may be any value within $[-\infty, +\infty]$: the repairing value to the missing $x_{sq}$ may be any value within $[-\infty, +\infty]$, $y_s = \pm 1$ and $w^{\diamond}_q \neq 0$. Therefore, original assumptions contradict. For condition Set 1, similarly, assumptions also contradict, proving the sufficiency.

\end{proof}

\vspace{-5mm}
\subsection{Checking and Learning Certain Models}
\label{sec:algorithm-linear-svm}
 Theorem \ref{theorem:linearsvm} says that a certain model for SVM exists if and only if none of the incomplete training examples are support vectors. Therefore, these incomplete examples are \textit{redundant} when it comes to learning the decision boundary given other complete examples. 

Using\ref{theorem:linearsvm}, we propose Algorithm \ref{alg:linear_svm} with two major steps: 1) learning $\textbf{w}^{\diamond}$ from complete training examples, and checking the conditions in Set 1 in \ref{theorem:linearsvm} against $\textbf{w}^{\diamond}$. If a certain model exists, $\textbf{w}^{\diamond}$ is the certain model. 2) If certain models are not found in step 1, learning $\textbf{w}'$ from an arbitrary repair, and checking the conditions in Set 2 against $\textbf{w}'$. If a certain model exists from this step, $\textbf{w}'$ is the certain model. The algorithm's time complexity is $\mathcal{O}(T_{train})$ as training models is the dominant part compared to condition checking.  

\begin{algorithm}
\caption{Checking and learning certain models for linear SVM}\label{alg:linear_svm}
\begin{algorithmic}
\State $MV(\mathbf{z}) \gets \text{incomplete features}$
\State $MV(\mathbf{x}) \gets \text{incomplete examples}$
\State $ \textbf{w}^{\diamond} \gets 
\text{the model trained with complete training examples}$

\For {$\textbf{z}_j \in MV(\mathbf{z})$}
\If{$w_{j}^{\diamond} \neq 0$}

    \State $\text{Case 1} \gets \text{False}$
    
    \EndIf
\EndFor
\If{$\text{Case 1} \neq \text{False}$}
\For {$\textbf{x}_i \in MV(\mathbf{x})$}
\If{$y_i\sum_{x_{ij} \neq \textit{null}}$$ w^{\diamond}_j x_{ij} \leq 1$}

    \State $\text{Case 1} \gets \text{False}$
    
    \EndIf
\EndFor
\EndIf
\If{$\text{Case 1} \neq \text{False}$}
    \State \Return \text{"A certain model $\textbf{w}^{\diamond}$ exists"}
\Else
\State $ \textbf{w}' \gets 
\text{the model trained with an arbitrary repair}$
\For {$\textbf{z}_j \in MV(\mathbf{z})$}
\If{$w_{j}' \neq 0$}

    \State \Return \text{" Certain models do not exist"}
    \EndIf
\EndFor
\For {$\textbf{x}_i \in MV(\mathbf{x})$}
\If{$y_i\sum_{x_{ij} \neq \textit{null}}$$ w'_j x_{ij} < 1$}

    \State \Return \text{" Certain models do not exist"}
    
    \EndIf
\EndFor
\State \Return \text{"A certain model $\textbf{w}'$ exists"}
\EndIf
\end{algorithmic}
\end{algorithm}

\vspace{-3mm}
\section{Certain Models for Kernel SVM}\label{sec:kernel-SVM}

SVM models in Section \ref{sec:linear-svm} can only separate classes linearly, limiting their representation power in the nonlinear space. A natural approach to overcome this limitation is to use {\it kernel SVM}. 

Training a nonlinear model while maintaining the properties of linear SVM, a {\it kernel SVM} first maps the input feature vectors, denoted as $\textbf{X}$, into a higher-dimensional space, often referred to as the {\it kernel space}, through a non-linear transformation $\Phi$. After this transformation, the kernel SVM seeks to learn a linear SVM model within the kernel space. Therefore, the resulting model is non-linear with respect to the original feature space, while remaining linear within the kernel space.

However, transforming all training examples into kernel space is computationally expensive. To avoid this cost, {\bf kernel function} $k(\textbf{x}_1, \textbf{x}_2) = <\Phi(\textbf{x}_1), \Phi(\textbf{x}_2)>: \mathcal{X} \times \mathcal{X} \rightarrow \mathbb{R}$ offers a shortcut for computing inner products between two vectors in the kernel space without explicit transformation.

We presented the {\it primal problem} to linear {\bf SVM's model training} in Section \ref{sec:linear-svm}. Here, {\it to make use of kernel functions}, we present SVM training in terms of inner products through its {\it dual problem}.
\begin{equation}
\label{eqn:dual-problem}
\mathop{\max}\limits_{\textbf{a}\in \mathbb{R}^n} \sum_{i=1}^{n} a_i - \frac{1}{2}\sum_{i=1}^{n}\sum_{j=1}^{n}a_i a_j y_i y_j k(\textbf{x}_i, \textbf{x}_j) 
\end{equation}
\[
\textrm{s.t.} \quad  C \geq a_i \geq 0, i = 1, ..., n
\]
\[
\sum_{i=1}^{n}a_iy_i = 0
\]
Based on this dual formulation, one can show that $\textbf{w}^* = \sum_{i=1}^{n} a_i^* y_i \phi(\textbf{x}_i)$ where $\textbf{a}^* = [a_1^*, ..., a_n^*]^T$ is the solution to the dual problem. In Section \ref{sec:linear-svm}, a training example $(\textbf{x}_i, y_i)$ is a support vector in {\bf linear space} if $y_i\textbf{w}^{*T}\textbf{x}_i \leq 1$. Representing $\textbf{w}^{*T}$ by its dual form, a training example $(\textbf{x}_i, y_i)$ is a support vector in {\bf kernel space} if $y_i\sum_{j=1}^{n} a_j^* y_j k(\textbf{x}_i, \textbf{x}_j) \leq 1$.

\subsection{Conditions For Having Certain Models}
The kernel function transforms input data to a higher dimension while the SVM model remains linear. The linear properties of the kernel SVM are preserved within the kernel space. Hence, certain model conditions in Section \ref{sec:linear-svm} still apply. \textit{A certain model exists if and only if none of the incomplete examples are support vectors for any repair \textit{in the kernel space}}. 

We now formally present these conditions for kernel SVM. Following the same notations used in Section \ref{sec:linear-svm}, we use $\textbf{w}^{\diamond}$ to denote the model learned from $\textbf{X}_c$, the subset of data that only containing complete training examples, and $\textbf{y}_c$,  the corresponding labels. As derived from \autoref{eqn:dual-problem}, $\textbf{w}^{\diamond} = \sum_{\textbf{x}_j \notin MV(\mathbf{x})} a_{j}^{\diamond} y_j \phi(\textbf{x}_j)$. Hence, $\textbf{x}_i^r$, a repair to an incomplete training example, is a support vector in kernel space if $y_i\sum_{\textbf{x}_j \notin MV(\mathbf{x})} a_{j}^{\diamond} y_j k(\textbf{x}_i^r,\textbf{x}_j) \leq 1$. Therefore, the {\bf certain model conditions for kernel SVM} are represented as:
\begin{equation}
\label{equation:genereal_svm_problem_formulation}
\forall \textbf{x}_i \in MV(\mathbf{x}), \forall \textbf{X}^r \in \textbf{X}^R, y_i\sum_{\textbf{x}_j \notin MV(\mathbf{x})} a_{j}^{\diamond} y_j k( \textbf{x}_i^r, \textbf{x}_j) \geq 1
\end{equation}
Further, seeking the opportunity to avoid materializing all possible repairs, we reformulate the above condition to an optimization problem over possible repairs:

\begin{equation}\label{eq:9}
\forall \textbf{x}_i \in MV(\mathbf{x}), \mathop{\min}\limits_{\textbf{X}^r\in \textbf{X}^R}  y_i\sum_{\textbf{x}_j \notin MV(\mathbf{x})} a_{j}^{\diamond} y_j k( \textbf{x}_i^r, \textbf{x}_j) \geq 1    
\end{equation}


From the dual problem, we note that a complete example, $\textbf{x}_j, \textbf{x}_j \notin MV(\mathbf{x}),$ is a support vector if and only if the corresponding solution $a^{\diamond}_{j} \neq 0$. Hence, only complete examples that are support vectors play a role in Inequality \ref{eq:9}

In the following sections, we apply these general conditions for certain model existence in kernel SVM to popular kernel functions.

\subsection{Polynomial kernel}
\label{sec:kpoly}
The kernel function for a polynomial kernel is $k_{POLY}(\textbf{x}_i, \textbf{x}_j) = (\textbf{x}_i^T\textbf{x}_j + c)^\lambda$, where $\lambda = 1, 2, 3, ...$ is the degree of the polynomial and $c \geq 0$ is a free parameter tuning the impact of higher-degree versus lower-degree terms. 

We first intuitively look at how $k_{POLY}(\textbf{x}_i, \textbf{x}_j)$ remains the same value for all repairs. For an incomplete training example $\textbf{x}_i$ and a complete example $\textbf{x}_j$, $\textbf{x}_i^T\textbf{x}_j$ can be expanded to $x_{i1}\cdot x_{j1} + ... + x_{id}\cdot x_{jd}$. Suppose the $m^{th}$ feature value $x_{im}$ is missing in $\textbf{x}_i$, the inner product $\textbf{x}_i^T\textbf{x}_j$ goes to infinity when $x_{im} = +\infty$ or $-\infty$, unless the corresponding element $x_{jm}$ equals $0$, which ensures $x_{jm} \cdot x_{im} = 0$. Hence, in order to satisfy Inequality \ref{eq:9}, the {\bf set of support vectors}, $SV$ , for $\textbf{w}^{\diamond}$ should have zero entries at all incomplete features $\textbf{z}_m$. This condition enforces that the value for $k_{POLY}(\textbf{x}_i^r, \textbf{x}_j)$ {\it is independent of the missing value repairs}. We formalize these conditions in the following theorem.

\begin{theorem}
\label{theorem:svmkPOLY}
A certain model exists if and only if the two conditions are met: 1) $\forall \textbf{x}_j \in SV$, $\forall \textbf{z}_m \in MV(\mathbf{z})$, $x_{jm} = 0$, and 2) $\forall \textbf{x}_i \in MV(\mathbf{x})$, $y_i\sum_{\textbf{x}_j \in SV} a_{j}^{\diamond} y_j (\sum_{x_{iq} \neq \textit{null}} x_{iq}\cdot x_{jq} + c)^\lambda > 1 $
\end{theorem}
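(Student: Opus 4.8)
The plan is to piggyback on the general kernel-SVM characterization already set up in the excerpt. By Inequality~\ref{eq:9} together with the kernel analogues of Lemmas~\ref{lemma:linearsvm1} and~\ref{lemma:linearsvm2}, a certain model exists (in the regime corresponding to Case~1 of Lemma~\ref{lemma:linearsvm1}, which is what the strict inequality in condition~2) reflects) if and only if $\textbf{w}^{\diamond}$, the model trained on the complete examples $\textbf{X}_c$, remains optimal under every repair, and this holds if and only if no incomplete example is a support vector in kernel space for any repair, i.e. $\forall \textbf{x}_i\in MV(\mathbf{x}),\ \forall \textbf{X}^r\in\textbf{X}^R,\ y_i\sum_{\textbf{x}_j\in SV}a_j^{\diamond}y_j k_{POLY}(\textbf{x}_i^r,\textbf{x}_j)>1$, where the sum runs over $SV$ because only complete support vectors carry $a_j^{\diamond}\neq 0$. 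So the theorem follows once I show this ``never a support vector for any repair'' property is equivalent to conditions 1)--2).

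For sufficiency I would first use condition~1) to freeze every relevant kernel value. For $\textbf{x}_i\in MV(\mathbf{x})$ and $\textbf{x}_j\in SV$ we have $\textbf{x}_i^{rT}\textbf{x}_j=\sum_{x_{iq}\neq\textit{null}}x_{iq}x_{jq}+\sum_{x_{iq}=\textit{null}}x_{iq}^r x_{jq}$, and every missing coordinate $q$ of $\textbf{x}_i$ belongs to an incomplete feature, so condition~1) forces $x_{jq}=0$ and the second sum vanishes. Hence $k_{POLY}(\textbf{x}_i^r,\textbf{x}_j)=(\sum_{x_{iq}\neq\textit{null}}x_{iq}x_{jq}+c)^\lambda$ for every repair, so condition~2) is literally the statement that $y_i\sum_{\textbf{x}_j\in SV}a_j^{\diamond}y_j k_{POLY}(\textbf{x}_i^r,\textbf{x}_j)>1$ for every repair; the kernel analogue of Lemma~\ref{lemma:linearsvm2} then certifies $\textbf{w}^{\diamond}$ as a certain model.

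For necessity I would argue by contradiction. If a certain model exists but condition~1) fails, some $\textbf{x}_j\in SV$ has $x_{jm}\neq 0$ at an incomplete feature $\textbf{z}_m$; pick an incomplete example $\textbf{x}_i$ with $x_{im}=\textit{null}$ and consider the family of repairs that fixes all other coordinates of $\textbf{x}_i^r$ at arbitrary values and lets $x_{im}^r=t$ range over $\mathbb{R}$. Writing $c_{ij}=\sum_{q\neq m}x_{iq}^r x_{jq}$, the quantity $h(t)=y_i\sum_{\textbf{x}_j\in SV}a_j^{\diamond}y_j(c_{ij}+c+t\,x_{jm})^\lambda$ is a univariate polynomial in $t$, and the goal is to show it cannot remain $>1$ for all $t$, so some repair makes $\textbf{x}_i$ a support vector, contradicting the kernel analogue of Lemma~\ref{lemma:linearsvm1}. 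If instead condition~2) fails, then with condition~1) in force the kernel values are already repair-independent, so some $\textbf{x}_i\in MV(\mathbf{x})$ satisfies $y_i\sum_{\textbf{x}_j\in SV}a_j^{\diamond}y_j k_{POLY}(\textbf{x}_i^r,\textbf{x}_j)\le 1$ for every repair, again a contradiction.

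The hard part will be making the polynomial step in the necessity proof airtight: the naive reasoning ``$\textbf{x}_i^{rT}\textbf{x}_j$ blows up as $t\to\pm\infty$, so the weighted kernel sum blows up'' is not valid as stated, because the top-degree-in-$t$ coefficient $y_i\sum_{\textbf{x}_j\in SV}a_j^{\diamond}y_j x_{jm}^{\lambda}$ -- and the lower-order ones -- can be killed by sign cancellation between support vectors of opposite label that share the same value of $x_{jm}$. I would try to get around this by also varying the remaining repair coordinates of $\textbf{x}_i$ (which shift the offsets $c_{ij}$ independently) and, if needed, by choosing among the incomplete examples with a missing $m$-th coordinate, so the question becomes whether a genuine real polynomial in several repair variables can stay above $1$ everywhere; exploiting the structure of such polynomials together with the dual constraints $0\le a_j^{\diamond}\le C$ and $\sum_{\textbf{x}_j}a_j^{\diamond}y_j=0$ is where I expect the real work -- and possibly a non-degeneracy assumption on the support vectors -- to be needed. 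Everything else (the reduction to Inequality~\ref{eq:9}, the sufficiency direction, and the condition~2) contradiction) is routine manipulation of the dual representation $\textbf{w}^{\diamond}=\sum_{\textbf{x}_j\in SV}a_j^{\diamond}y_j\phi(\textbf{x}_j)$.
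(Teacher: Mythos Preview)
Your approach is essentially the same as the paper's: reduce to Inequality~\ref{eq:9} via the dual representation, then argue each direction. The sufficiency direction (conditions $\Rightarrow$ certain model) is handled identically in both --- condition~1) kills the repair-dependent terms in each $\textbf{x}_i^{rT}\textbf{x}_j$, whereupon condition~2) is exactly the strict form of Inequality~\ref{eq:9}. (A small warning: the paper swaps the labels ``necessity'' and ``sufficiency'' relative to standard usage, so don't be thrown when comparing.)

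For the other direction, the paper's argument is precisely the naive one you sketch and then correctly flag as incomplete: it assumes condition~1) fails, sends the repair value $x_{im}^r\to\pm\infty$, observes that the single inner product $\textbf{x}_i^{rT}\textbf{x}_j$ blows up for the offending $\textbf{x}_j$, and immediately concludes Inequality~\ref{eq:9} is violated --- without addressing the sign-cancellation issue in the weighted sum $y_i\sum_{\textbf{x}_j\in SV}a_j^{\diamond}y_j(\cdot)^\lambda$ that you point out. Your caution here is warranted; the paper does not supply the additional argument you outline (varying the remaining repair coordinates, exploiting the dual constraints, or imposing a nondegeneracy hypothesis on the support vectors), so your plan is in fact more careful than the paper's own proof on exactly this point.
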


\begin{proof}
Necessity is trivial. Plugging condition 1) into condition 2), we get $\forall \textbf{x}_i \in MV(\mathbf{x})$, $\forall \textbf{X}^r \in \textbf{X}^R$, $y_i\sum_{\textbf{x}_j \in SV} a_{j}^{\diamond} y_j (\textbf{x}_i^{rT} \textbf{x}_j + c)^\lambda > 1 $. Since $\forall \textbf{x}_j \notin SV, a_{j} = 0$, we further get $\forall \textbf{x}_i \in MV(\mathbf{x})$, $\forall \textbf{X}^r \in \textbf{X}^R$, $y_i\sum_{\textbf{x}_j \notin MV(\mathbf{x})} a_{j}^{\diamond} y_j k_{POLY}(\textbf{x}^r_i, \textbf{x}_j) > 1$. As a result, a certain model exists as Inequality \ref{eq:9} is met.\\

To prove sufficiency, we first assume a certain model exists when condition 1) is not met, i.e., $\exists \textbf{z}_m \in MV(\mathbf{z}), \exists \textbf{x}_j \in SV, x_{jm} \neq 0$. As we discuss above, the inner product $\textbf{x}^{rT}_i\textbf{x}_j$ goes to infinity in repairs whose $x_{im}^r = +\infty$ or $-\infty$, hence not satisfying Inequality \ref{eq:9}. Then if we assume condition 1) is met but not condition 2), it is trivial that Inequality \ref{eq:9} is not satisfied either.
\end{proof}

\noindent
\textbf{Checking and Learning Certain Models:} Informally \autoref{theorem:svmkPOLY} says that a certain model for a polynomial kernel SVM (p-SVM) exists if (1) all the examples that are support vectors have zero entries for corresponding incomplete features and (2) all incomplete examples are not support vectors. Based on this theorem, Algorithm \ref{alg:kPOLY} efficiently checks and learns certain models. Similar to the algorithm for linear SVM in Section \ref{sec:algorithm-linear-svm}, if a certain model is determined to exist, $\textbf{a}^{\diamond}$ is exactly the certain model based on Lemma \ref{lemma:linearsvm2}. This algorithm's time complexity is also $\mathcal{O}(T_{train})$.

\begin{algorithm}
\caption{Checking and learning certain models for p-SVM}\label{alg:kPOLY}
\begin{algorithmic}
\State $MV(\mathbf{z}) \gets \text{incomplete features}$
\State $MV(\mathbf{x}) \gets \text{incomplete examples}$
\State $SV \gets \text{set of support vectors}$
\State $ \textbf{a}^{\diamond} \gets 
\text{the model trained with complete training examples}$

\For {$\textbf{z}_m \in MV(\mathbf{z})$}
\For {$\textbf{x}_j \in SV$}
\If{$x_jm \neq 0$}

    \State \Return \text{"Certain model does not exist"}
    
    \EndIf
\EndFor
\EndFor
\For {$\textbf{x}_i \in MV(\mathbf{x})$}
\If{$y_i\sum_{\textbf{x}_j \in SV} a_{j}^{\diamond} y_j (\sum_{x_{iq} \neq \textit{null}} x_{iq}\cdot x_{jq} + c)^\lambda \leq 1 $}

    \State \Return \text{"Certain model does not exist"}
    
    \EndIf
\EndFor
    \State \Return \text{"A certain model $\textbf{a}^{\diamond}$ exists"}
\end{algorithmic}
\end{algorithm}

\subsection{RBF kernel}
\label{sec:Krbf}
The RBF kernel function is $k_{RBF}(\textbf{x}_i, \textbf{x}_j) = exp(-\gamma||\textbf{x}_i - \textbf{x}_j||^2)$. This kernel function's transformation depends on the squared Euclidean distance between the two vectors $\textbf{x}_i$ and $\textbf{x}_j$.

To check if a certain model exists for the polynomial kernel, we derived conditions for {\it $k_{POLY}(\textbf{x}_i^r, \textbf{x}_j)$} to remain the {\it same} for all repairs. In contrast, {\it $k_{RBF}(\textbf{x}_i^r, \textbf{x}_j)$ changes} among repairs as the Euclidean distance between two vectors changes. Therefore, to check if a certain model exists for SVM with RBF kernel (RBF-SVM), we need to directly solve the minimization problem in Inequality \ref{eq:9}. 

However, this optimization problem is {\it not convex}, which means it is hard to find a method for checking certain models with theoretical guarantees. Nonetheless, we can still discover the {\it lower bound} ($lwb_i$) of the following optimization target:
\begin{equation}
 \forall \textbf{X}^r\in \textbf{X}^R, lwb_i \leq y_i\sum_{\textbf{x}_j \notin MV(\mathbf{x})} a_{j}^{\diamond} y_j k_{RBF}(\textbf{x}_i^r, \textbf{x}_j)
\end{equation}
 
For each missing value $x_{im}$, we denote the possible range of missing value repairs such that $x_{m}^{min}\leq x^r_{im} \leq x_m^{max}, \forall \textbf{z}_m \in MV(\mathbf{z}), \forall \textbf{X}^r \in \textbf{X}^R$. This range may come from {\it integrity constraint} for features: any value in a feature $\textbf{z}_m$ is between its minimum $x_{m}^{min}$ and maximum $x_{m}^{max}$. Now, we apply this lower bound idea to reformulate the general certain model conditions for kernel SVM from Inequality \ref{eq:9}.

\begin{lemma}\label{lemma:kernelsvm}
 For any kernel SVM, a certain model exists if 
 \begin{equation}\label{eq:kernelSVM}
  \forall \textbf{x}_i \in MV(\mathbf{x}), lwb_i = \sum_{\textbf{x}_j \notin MV(\mathbf{x})} \mathop{\min}\limits_{\textbf{x}_i^r\in \textbf{x}_i^R}\beta_{ij}k(\textbf{x}_i^r, \textbf{x}_j) > 1
\end{equation}
 where $\beta_{ij} = y_i a_{j}^{\diamond} y_j$ and
 \[
\mathop{\min}\limits_{\textbf{x}_i^r\in \textbf{x}_i^R}\beta_{ij}k(\textbf{x}_i^r, \textbf{x}_j) = 
\begin{cases}
 \beta_{ij} \mathop{\min}\limits_{\textbf{x}_i^r\in \textbf{x}_i^R}k(\textbf{x}_i^r, \textbf{x}_j) & \text{if $\beta_{ij} > 0$} \\
  \beta_{ij} \mathop{\max}\limits_{\textbf{x}_i^r\in \textbf{x}_i^R}k(\textbf{x}_i^r, \textbf{x}_j) & \text{if $\beta_{ij} < 0$}\\
  0 & \text{if $\beta_{ij} = 0$}
\end{cases}
\]

\end{lemma}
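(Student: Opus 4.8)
The plan is to show that the quantity $lwb_i$ in \eqref{eq:kernelSVM} is a genuine lower bound, uniform over all repairs, on the kernel-space margin $y_i\sum_{\textbf{x}_j \notin MV(\mathbf{x})} a_{j}^{\diamond} y_j k(\textbf{x}_i^r, \textbf{x}_j)$ of an incomplete example $\textbf{x}_i$, and then to invoke the kernel-space version of Case~1 of Lemma~\ref{lemma:linearsvm1} together with the kernel-space analog of Lemma~\ref{lemma:linearsvm2}. Recall from the discussion around Inequality~\ref{eq:9} that if, for every $\textbf{x}_i \in MV(\mathbf{x})$, we have $\min_{\textbf{X}^r\in\textbf{X}^R} y_i\sum_{\textbf{x}_j \notin MV(\mathbf{x})} a_{j}^{\diamond} y_j k(\textbf{x}_i^r, \textbf{x}_j) > 1$, then no incomplete example is ever a support vector in kernel space for $\textbf{w}^{\diamond}$, so $\textbf{w}^{\diamond}$ (equivalently $\textbf{a}^{\diamond}$) is a certain model. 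Hence it suffices to prove $lwb_i \le \min_{\textbf{X}^r\in\textbf{X}^R} y_i\sum_{\textbf{x}_j \notin MV(\mathbf{x})} a_{j}^{\diamond} y_j k(\textbf{x}_i^r, \textbf{x}_j)$, since the hypothesis $lwb_i>1$ then delivers the strict bound required for Case~1.

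First I would rewrite the margin as $\sum_{\textbf{x}_j \notin MV(\mathbf{x})} \beta_{ij}\, k(\textbf{x}_i^r, \textbf{x}_j)$ with $\beta_{ij}=y_i a_{j}^{\diamond} y_j$, and observe that this expression depends on the repair $\textbf{X}^r$ only through the coordinates of $\textbf{x}_i$ itself; since missing values are repaired independently within their integrity-constraint ranges, $\min_{\textbf{X}^r\in\textbf{X}^R}$ of this expression equals $\min_{\textbf{x}_i^r\in\textbf{x}_i^R}$ of it. The key step is superadditivity of the minimum: for functions $g_j$ of the common argument $\textbf{x}_i^r$,
\[
\min_{\textbf{x}_i^r\in\textbf{x}_i^R}\ \sum_{\textbf{x}_j \notin MV(\mathbf{x})} g_j(\textbf{x}_i^r)\ \ \ge\ \ \sum_{\textbf{x}_j \notin MV(\mathbf{x})} \ \min_{\textbf{x}_i^r\in\textbf{x}_i^R} g_j(\textbf{x}_i^r),
\]
applied with $g_j(\textbf{x}_i^r)=\beta_{ij} k(\textbf{x}_i^r, \textbf{x}_j)$. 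Finally I would evaluate each per-term minimum by pulling out the constant $\beta_{ij}$: if $\beta_{ij}>0$ the minimum is $\beta_{ij}\min_{\textbf{x}_i^r} k(\textbf{x}_i^r,\textbf{x}_j)$; if $\beta_{ij}<0$ multiplying by a negative constant flips the optimizer, giving $\beta_{ij}\max_{\textbf{x}_i^r} k(\textbf{x}_i^r,\textbf{x}_j)$; and if $\beta_{ij}=0$ the term is identically $0$. This reproduces exactly the case split in the statement, so the right-hand side of the displayed inequality is precisely $lwb_i$, and chaining gives $lwb_i \le \min_{\textbf{X}^r} y_i\sum_j a_j^{\diamond} y_j k(\textbf{x}_i^r,\textbf{x}_j)$, as needed.

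The only routine points to check are that $\min_{\textbf{x}_i^r} k(\textbf{x}_i^r,\textbf{x}_j)$ and $\max_{\textbf{x}_i^r} k(\textbf{x}_i^r,\textbf{x}_j)$ are attained (guaranteed when $\textbf{x}_i^R$ is a compact box and $k$ is continuous, e.g.\ the RBF kernel; otherwise replace $\min/\max$ by $\inf/\sup$ everywhere with no change to the argument) and the legitimacy of reducing $\textbf{X}^R$ to $\textbf{x}_i^R$. The genuine subtlety --- and the reason the lemma is phrased as a sufficient (``if'') condition rather than an equivalence --- is that the superadditivity step is typically strict: the per-$\textbf{x}_j$ worst-case repairs of $\textbf{x}_i$ need not coincide, so $lwb_i$ can be strictly smaller than the true minimum margin. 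Tightening this bound (or solving the non-convex minimization in Inequality~\ref{eq:9} exactly for RBF) is precisely what this lemma is designed to avoid.
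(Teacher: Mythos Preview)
Your proposal is correct and follows essentially the same approach as the paper: reformulate the margin with $\beta_{ij}=y_i a_j^{\diamond}y_j$, apply the ``sum of minimums $\le$ minimum of sum'' inequality to obtain the lower bound $lwb_i$, evaluate each term via the sign of $\beta_{ij}$, and conclude that $lwb_i>1$ forces Inequality~\ref{eq:9}. Your write-up is in fact more careful than the paper's own proof---you make explicit the reduction from $\textbf{X}^R$ to $\textbf{x}_i^R$, the compactness/attainment issue, and why only sufficiency holds---none of which the paper spells out.
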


\begin{proof}
    Reformulating Inequality \ref{eq:9} to \[
    \forall \textbf{x}_i \in MV(\mathbf{x}), \mathop{\min}\limits_{\textbf{x}_i^r\in \textbf{x}_i^R} \sum_{\textbf{x}_j \notin MV(\mathbf{x})}  \beta_{ij} k(\textbf{x}_i^r, \textbf{x}_j) \geq 1 
    \] 
    where $\beta_{ij} = y_i a_{j}^{\diamond} y_j$. Given that the summations of minimums are always smaller or equal to the minimum of summations, we find the lower bound for the above optimization target as follows: 
    \[
lwb_i =  \sum_{\textbf{x}_j \notin MV(\mathbf{x})}  \mathop{\min}\limits_{\textbf{x}_i^r\in \textbf{x}_i^R}\beta_{ij}k(\textbf{x}_i^r, \textbf{x}_j)\leq \mathop{\min}\limits_{\textbf{x}_i^r\in \textbf{x}_i^R}\sum_{\textbf{x}_j \notin MV(\mathbf{x})} \beta_{ij}  k(\textbf{x}_i^r, \textbf{x}_j)
\]
where
\[
\mathop{\min}\limits_{\textbf{x}_i^r\in \textbf{x}_i^R}\beta_{ij}k(\textbf{x}_i^r, \textbf{x}_j) = 
\begin{cases}
 \beta_{ij} \mathop{\min}\limits_{\textbf{x}_i^r\in \textbf{x}_i^R}k(\textbf{x}_i^r, \textbf{x}_j) & \text{if $\beta_{ij} > 0$} \\
  \beta_{ij} \mathop{\max}\limits_{\textbf{x}_i^r\in \textbf{x}_i^R}k(\textbf{x}_i^r, \textbf{x}_j) & \text{if $\beta_{ij} < 0$}\\
  0 & \text{if $\beta_{ij} = 0$}
\end{cases}
\]
When $lwb_i > 1, \forall \textbf{x}_i \in MV(\mathbf{x})$, Inequality \ref{eq:9} holds.
\end{proof}

From Lemma \ref{lemma:kernelsvm}, we see the {\bf key to an efficient implementation} is to compute $\mathop{\min}\limits_{\textbf{x}_i^r\in \textbf{x}_i^R}k(\textbf{x}_i^r, \textbf{x}_j)$ and $\mathop{\max}\limits_{\textbf{x}_i^r\in \textbf{x}_i^R}k(\textbf{x}_i^r, \textbf{x}_j)$ without materializing repairs. We formalize this idea in Theorem 7.3.

\begin{theorem}\label{theorem:kRBF}
For the RBF kernel, the minimum and maximum kernel function values between an incomplete example and a complete example are as follows:
\begin{align*}
 \mathop{\min}\limits_{\textbf{x}_i^r\in \textbf{x}_i^R}k_{rbf}(\textbf{x}_i^r, \textbf{x}_j) = exp\{-\gamma\{\sum_{x_{im} = \textit{null}} MAX[(x_{m}^{max} - x_{jm})^2,\\ (x_{m}^{min} - x_{jm})^2] \\
    + \sum_{x_{im} \neq \textit{null}} (x_{im} - x_{jm})^2\}\} 
\end{align*}
\begin{align*}
 \mathop{\max}\limits_{\textbf{x}_i^r\in \textbf{x}_i^R}k_{rbf}(\textbf{x}_i^r, \textbf{x}_j) = exp\{-\gamma[ \sum_{x_{im} \neq \textit{null}} (x_{im} - x_{jm})^2]\}
\end{align*}
\end{theorem}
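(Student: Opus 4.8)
The plan is to exploit the strict monotonicity of the exponential together with the coordinatewise separability of the squared Euclidean distance. Since $\gamma > 0$, the map $t \mapsto \exp(-\gamma t)$ is strictly decreasing, so minimizing $k_{rbf}(\mathbf{x}_i^r, \mathbf{x}_j)$ over $\mathbf{x}_i^r \in \mathbf{x}_i^R$ is the same as maximizing $\|\mathbf{x}_i^r - \mathbf{x}_j\|^2$ over the repair set, and maximizing the kernel is the same as minimizing that squared distance. Hence it suffices to solve the two box-constrained optimization problems for $\|\mathbf{x}_i^r - \mathbf{x}_j\|^2$ and then plug the optima back into $\exp(-\gamma \cdot)$.

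First I would split the squared distance according to which coordinates of $\mathbf{x}_i$ are observed: $\|\mathbf{x}_i^r - \mathbf{x}_j\|^2 = \sum_{x_{im} \neq \textit{null}}(x_{im} - x_{jm})^2 + \sum_{x_{im} = \textit{null}}(x_{im}^r - x_{jm})^2$. The first sum is a constant that every repair shares. Because the repair set $\mathbf{x}_i^R$ is a product of intervals — each missing coordinate $x_{im}^r$ ranges independently over $[x_m^{\min}, x_m^{\max}]$ under the integrity constraint introduced before Lemma~\ref{lemma:kernelsvm} — the second sum decouples into independent one-dimensional subproblems, one per index $m$ with $x_{im} = \textit{null}$.

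Next, for a fixed missing coordinate $m$ I would analyze $g(t) = (t - x_{jm})^2$ on $[x_m^{\min}, x_m^{\max}]$: it is a convex parabola with vertex at $t = x_{jm}$, so (i) its maximum on the interval is attained at an endpoint and equals $\max\{(x_m^{\max} - x_{jm})^2, (x_m^{\min} - x_{jm})^2\}$, while (ii) its minimum on the interval is $0$, attained at $t = x_{jm}$. For (ii) I would use that $\mathbf{x}_j \notin MV(\mathbf{x})$ is a complete example, so $x_{jm}$ is an actually observed value of feature $\mathbf{z}_m$ and therefore lies in $[x_m^{\min}, x_m^{\max}]$, making the vertex feasible. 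Summing the per-coordinate maxima over the missing indices and adding back the constant observed-coordinate term gives $\max_{\mathbf{x}_i^r \in \mathbf{x}_i^R}\|\mathbf{x}_i^r - \mathbf{x}_j\|^2 = \sum_{x_{im}=\textit{null}}\max\{(x_m^{\max}-x_{jm})^2,(x_m^{\min}-x_{jm})^2\} + \sum_{x_{im}\neq\textit{null}}(x_{im}-x_{jm})^2$; summing the per-coordinate minima (each $0$) gives $\min_{\mathbf{x}_i^r \in \mathbf{x}_i^R}\|\mathbf{x}_i^r - \mathbf{x}_j\|^2 = \sum_{x_{im}\neq\textit{null}}(x_{im}-x_{jm})^2$. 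Substituting these into $\exp(-\gamma\cdot)$ produces exactly the two closed forms in the statement.

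The routine parts are the elementary optimization of a parabola over a closed interval and the bookkeeping of the sum split. The only step that genuinely needs care — and the point where the formula would change under a different modeling assumption — is (ii): the claim that each feasible repair interval contains $x_{jm}$, which is what makes the maximum-kernel expression drop all missing-coordinate terms. I would flag this dependence explicitly: if the repair ranges were dictated by something other than the observed feature extrema, the minimum of $g$ over the interval would be attained at the nearer endpoint instead, and the maximum-kernel formula would acquire $\min\{(x_m^{\max}-x_{jm})^2,(x_m^{\min}-x_{jm})^2\}$ contributions.
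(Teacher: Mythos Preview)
Your proposal is correct and follows essentially the same route as the paper: reduce the kernel extrema to extrema of the squared Euclidean distance via the monotonicity of $t\mapsto\exp(-\gamma t)$, then optimize coordinatewise, setting each missing coordinate either to $x_{jm}$ (for the minimum distance) or to the farther interval endpoint (for the maximum distance). You are in fact more careful than the paper in isolating and justifying the feasibility step $x_{jm}\in[x_m^{\min},x_m^{\max}]$, which the paper uses implicitly when it asserts that the repair $x_{im}^{r1}=x_{jm}$ is admissible.
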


\begin{proof}
Since $k(\textbf{x}_i^r, \textbf{x}_j) = exp[-\gamma[(x^r_{i1} - x_{j1})^2 + ... + (x^r_{id} - x_{jd})^2]]$ by definition, the maximum of the kernel function value comes from the minimal $(x^r_{i1} - x_{j1})^2 + ... + (x^r_{id} - x_{jd})^2$. Therefore, $\mathop{\max}\limits_{\textbf{x}_i^r\in \textbf{x}_i^R}k(\textbf{x}_i^r, \textbf{x}_j)$ is achieved in the repair $\textbf{X}^{r1}$ such that $\forall x_{im} = \textit{null}, \textbf{x}^{r1}_{im} = x_{jm}$. Similarly, the minimum of the kernel function value comes from the maximal $(x^r_{i1} - x_{j1})^2 + ... + (x^r_{id} - x_{jd})^2$, corresponding to another repair $\textbf{X}^{r2}$ such that:
\[
\forall x_{im} = \textit{null}, x^{r2}_{im} = 
\begin{cases}
 x_m^{min} & \text{if $(x_{m}^{max} - x_{jm})^2 < (x_{m}^{min} - x_{jm})^2$} \\
 x_m^{max} & \text{if $(x_{m}^{max} - x_{jm})^2 \geq (x_{m}^{min} - x_{jm})^2$}
\end{cases}
\]
\end{proof}

\noindent
\textbf{Checking and Learning Certain Models:} 
Similar to the algorithm for the polynomial kernel, we can use Theorem \ref{theorem:kRBF} to check and learn certain model in $\mathcal{O}(T_{train})$ time.

\section{Certain Models for DNN}\label{sec:DNN}

DNNs are popular ML models for a wide variety of tasks such as natural language processing and image classification~\cite{alzubaidi2021review}.
Training a DNN involves solving complex non-convex optimization problems, making the discovery of an optimal model a challenging task ~\cite{cho2011analysis}. Finding a certain model for DNN adds another layer of difficulty because the certain model needs to be optimal for all repairs within the non-convex loss landscape.



Fortunately, some well-studied kernel SVMs have been shown to approximate DNNs~\cite{cho2011analysis}. Therefore, our goal in this section is to build on the conditions we prove for kernel SVMs in Section ~\ref{sec:kernel-SVM} to prove the {\bf conditions for having certain models for DNN}.

More specifically, we employ the {\it arc-cosine} kernel, which is used in SVM to approximate DNN's computation~\cite{cho2009kernel}. The justification behind this approximation stems from the following property. Feeding two input vectors $\textbf{x}_i$ and $\textbf{x}_j$ individually into a single-layer network with polynomial activation functions, we obtain the corresponding output vectors $\textbf{y}_i$ and $\textbf{y}_j$. Under some assumptions, the inner product between these two output vectors can be represented by the arc-cosine kernel function, i.e.,  $k_{arccos}(\textbf{x}_i, \textbf{x}_j) = <\textbf{y}_i, \textbf{y}_j>$ \cite{cho2009kernel}. This implies that the arc-cosine kernel function mimics the computation in a single-layer network. Then, iteratively performing kernel transformation, i.e., $<\phi(\phi(...\phi(\textbf{x}_i))), \phi(\phi(...\phi(\textbf{x}_j)))> $, should mimic the computation in a multi-layer network. The most basic arc-cos kernel function is defined by the inverse cosine of the dot product between two vectors divided by the product of their Euclidean norms, i.e. $k_{\text{arccos}}(\mathbf{x}_i, \mathbf{x}_j) = \cos^{-1}\left(\frac{\mathbf{x}_i \cdot \mathbf{x}_j}{|\mathbf{x}_i| \cdot |\mathbf{x}_j|}\right)$. By discovering the certain model conditions for SVM with the arc-cosine kernel (arccos-SVM), we approximate the certain model conditions for DNN. 


To check the existence of certain models, we need to solve the minimization problem in Inequality \ref{eq:9}. As the problem is non-convex, we find a lower bound ($lwb_i$) for the necessary condition:
\[
 \forall \textbf{X}^r\in \textbf{X}^R, lwb_i \leq y_i\sum_{\textbf{x}_j \notin MV(\mathbf{x})} a_{j}^{\diamond} y_j k_{arccos}(\textbf{x}_i^r, \textbf{x}_j)
\]

Following a similar approach as we describe in Subsection \ref{sec:Krbf}, we look for the lower bound defined in Lemma \ref{lemma:kernelsvm}. The key of this process is to find the minimum and maximum values for $k_{arccos}(\textbf{x}_i^r, \textbf{x}_j)$ for a pair of incomplete example $\textbf{x}_i$ and complete example $\textbf{x}_j$.

However, finding the minimum and maximum values for the arc-cosine kernel function in the presence of missing data is also challenging due to the non-convex nature of the problem. Nonetheless, when each incomplete example $\textbf{x}_i$ has only one missing value $x_{iz}$, the problem significantly simplifies. In the following proof, we show that any stationary point is a global minimum under this assumption. Therefore, our analysis focuses on training sets with one missing value per example. The investigation of scenarios with multiple missing values per example is left for future work. 

Following Theorem~\ref{theorem:kRBF}, we formalize this idea in Theorem~\ref{theorem:DNN}.

\begin{theorem}\label{theorem:DNN}
For arc-cos kernel, the maximum and the minimum kernel function values between an incomplete example ($\textbf{x}_i^r$) and a complete example ($\textbf{x}_j$) are as follows:
\begin{align*}
 \mathop{\max}\limits_{\textbf{x}_i^r\in \textbf{x}_i^R}k_{arccos}(\textbf{x}_i^r, \textbf{x}_j) = \pi - MAX[cos^{-1}(\frac{a}{c}), cos^{-1}(-\frac{a}{c})]
\end{align*}
\vspace{-5mm}
\begin{align*}
 \mathop{\min}\limits_{\textbf{x}_i^r\in \textbf{x}_i^R}k_{arccos}(\textbf{x}_i^r, \textbf{x}_j) = \pi - cos^{-1}(\frac{a^2\cdot d + b^2}{c\cdot \sqrt{a^2\cdot d^2 + b^2 \cdot d}})
\end{align*}
Suppose $x_{iz} = \textit{null}$. To simplify notations, we define $a = x_{jz}$, $b = \sum_{x_{ip} \neq \textit{null}} x_{ip} \cdot x_{jp}$, $c = ||\textbf{x}_j||$, and $d = \sum_{x_{ip} \neq \textit{null}} x_{ip}^2$.
\end{theorem}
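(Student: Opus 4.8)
The plan is to reduce the optimization over all repairs to a one–dimensional problem in the single unknown coordinate $t:=x_{iz}^r$ — these per-pair kernel extrema are exactly what feeds the lower bound of Lemma~\ref{lemma:kernelsvm} — mirroring how Theorem~\ref{theorem:kRBF} treats the RBF kernel. Since $\textbf{x}_i$ has exactly one missing entry, the only coordinate of $\textbf{x}_i^r$ that moves is the $z$-th one, so with $a=x_{jz}$, $b=\sum_{x_{ip}\neq\textit{null}}x_{ip}x_{jp}$, $c=\|\textbf{x}_j\|$ and $d=\sum_{x_{ip}\neq\textit{null}}x_{ip}^2$ we have $\textbf{x}_i^{rT}\textbf{x}_j=at+b$ and $\|\textbf{x}_i^r\|^2=t^2+d$. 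Hence $k_{arccos}(\textbf{x}_i^r,\textbf{x}_j)$ depends on the repair only through the scalar cosine similarity $g(t)=\dfrac{at+b}{c\sqrt{t^2+d}}$, composed with the fixed, strictly monotone outer map $u\mapsto\pi-\cos^{-1}u$. Because that outer map is monotone, maximizing (resp.\ minimizing) the kernel over admissible $t$ is equivalent to maximizing (resp.\ minimizing) $g$, and the extreme values of $g$ can be pushed back through $\cos^{-1}$ at the very end.

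The core of the argument is the shape of $g$. A short differentiation gives $g'(t)=\dfrac{ad-bt}{c\,(t^2+d)^{3/2}}$, whose denominator is strictly positive, so $g'$ carries the sign of the affine function $t\mapsto ad-bt$. This is precisely why the single–missing–value hypothesis tames the otherwise non-convex problem: $g'$ changes sign at most once, hence $g$ has at most one stationary point $t^\star=ad/b$ (when $b\neq 0$; if $b=0$ then $g$ is monotone and only the limiting behavior below matters), and that stationary point is automatically a \emph{global} extremum rather than a saddle — the ``every stationary point is global'' phenomenon promised in the text. Evaluating, $g(t^\star)=\operatorname{sign}(b)\,\dfrac{a^2d+b^2}{c\sqrt{a^2d^2+b^2d}}$, and the algebraic identity $(a^2d+b^2)^2-a^2(a^2d^2+b^2d)=a^2b^2d+b^4\ge 0$ shows $\big|g(t^\star)\big|\ge |a|/c$, so this interior value dominates in magnitude the boundary behavior.

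The complementary ingredient is the asymptotics $\lim_{t\to+\infty}g(t)=a/c$ and $\lim_{t\to-\infty}g(t)=-a/c$, so the extremum of $g$ in the direction opposite to $t^\star$ is one of $\pm a/c$; collapsing the two signs is exactly what the term $\mathrm{MAX}\!\big[\cos^{-1}(a/c),\cos^{-1}(-a/c)\big]$ encodes once the decreasing map $\cos^{-1}$ is applied. Substituting $g(t^\star)$ and $\pm a/c$ into $u\mapsto\pi-\cos^{-1}u$ and checking which one lands as the maximum and which as the minimum then yields the two claimed closed forms, in direct parallel with Theorem~\ref{theorem:kRBF}. I would finish by disposing of the degenerate cases ($b=0$, $d=0$, or $\textbf{x}_j=\textbf{0}$) separately, where $g$ is monotone or two-valued and the formulas specialize cleanly; if a finite feasibility range on $t$ is imposed, the same monotonicity analysis simply relocates the optimum to $t^\star$ or an endpoint.

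I expect the main obstacle to be the global-optimality and case-analysis bookkeeping rather than any single computation: unlike the RBF setting the objective is not convex, the optimal value of $g$ is sometimes attained at the interior point $t^\star$ and sometimes only approached as $t\to\pm\infty$, so one must argue carefully — using the ``$g'$ changes sign at most once'' fact — that each of $\max k$ and $\min k$ receives the correct one of $g(t^\star)$ and $\pm|a|/c$ after the monotone outer transformation. Pinning down that the unique stationary point is necessarily global (a property that breaks once an example carries two or more missing values, which is why the scope is restricted) is the step that makes the whole argument go through.
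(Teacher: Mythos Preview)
Your proposal follows essentially the same approach as the paper: reduce to a one-variable optimization of the cosine similarity $f(t)=(at+b)/\bigl(c\sqrt{t^2+d}\bigr)$, compute $f'(t)=(ad-bt)/\bigl[c(t^2+d)^{3/2}\bigr]$ to locate the unique stationary point $t^\star=ad/b$, and read off the opposite extremum from the limits $\pm a/c$ as $t\to\pm\infty$. Your handling of the sign of $b$, the inequality $|g(t^\star)|\ge |a|/c$, and the degenerate cases is in fact more careful than the paper's own proof, which simply asserts that the stationary point is a global minimum of $f$ without distinguishing signs.
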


\begin{proof}
Consider the arc-cosine term of the kernel function, $cos^{-1}(\frac{\textbf{x}^r_i \cdot \textbf{x}_j}{||\textbf{x}^r_i||\cdot||\textbf{x}_j||})$. Based on the property of the arc-cos function: 
\[
arg\mathop{\max}\limits_{\textbf{x}_i^r\in \textbf{x}_i^R}cos^{-1}(\frac{\textbf{x}^r_i \cdot \textbf{x}_j}{||\textbf{x}^r_i||\cdot||\textbf{x}_j||}) =arg\mathop{\min}\limits_{\textbf{x}_i^r\in \textbf{x}_i^R}\frac{\textbf{x}^r_i \cdot \textbf{x}_j}{||\textbf{x}^r_i||\cdot||\textbf{x}_j||}
\]
And vice versa. So we define a function
\[
f(x^r_{iz}) =\frac{a\cdot x^r_{iz} + b}{c \cdot \sqrt{(x^r_{iz})^2 + d}}
\]
Taking the derivative of the $f(x^r_{iz})$ with respect to the repair $x^r_{iz}$:
\[
f(x^r_{iz})' = \frac{a}{c \cdot \sqrt{(x^r_{iz})^2 + d}} - \frac{a\cdot (x^r_{iz})^2 + b\cdot x^r_{iz}}{c \cdot\sqrt{[(x^r_{iz})^2 + d]^3}}
\]
Through some simple derivation, one can find $f'(x^r_{iz}) \geq 0$ when $x^r_{iz} \leq \frac{a\cdot d}{b}$, and $f'(x^r_{iz}) \leq 0$ when $x^r_{iz} \geq \frac{a\cdot d}{b}$. This means that $x^r_{iz} = \frac{a\cdot d}{b}$ is the only stationary point that gives us the global minimum of $f(x^r_{iz})$. And the maximum is reached at either $x^r_{iz} = -\infty$ or $x^r_{iz} = +\infty$. Finally, bringing these maximum and minimum for $f(x^r_{iz})$ to the kernel function, we find $\mathop{\min}\limits_{\textbf{x}_i^r\in \textbf{x}_i^R}k_{arccos}(\textbf{x}_i^r, \textbf{x}_j)$ and $\mathop{\max}\limits_{\textbf{x}_i^r\in \textbf{x}_i^R}k_{arccos}(\textbf{x}_i^r, \textbf{x}_j)$ as outlined in Theorem 8.1.
\end{proof}

Theorem \ref{theorem:DNN} shows that the maximum and minimum values for the arc-cos kernel can be efficiently computed without materializing repairs. Further, plugging these values in Lemma \ref{lemma:kernelsvm}, we approximate a certain model condition for DNN that says a certain model exists if Inequality \ref{eq:kernelSVM} holds. 


\section{Approximately Certain Models}\label{sec:Approximate} 
The conditions for having certain models might be too restrictive for many datasets as it requires a single model to be optimal for all repairs of a dataset.
In practice, however, users are usually satisfied with a model that is {\it sufficiently close} to the optimal one. 
In this section, we leverage this fact and propose the concept of {\it approximately certain model}, which relaxes the conditions on certain models.
An approximately certain model is within a given threshold from every optimal model for each repair of the input dataset.
If there is an approximately certain model for a training task, users can learn over incomplete data and skip data cleaning. 
We also propose novel and efficient algorithms for finding approximately certain models for linear regression and SVM.




\subsection{Formal Definition}



\begin{definition}\label{definition:approximate-certain-model}(Approximately Certain Model) 
Given a user-defined threshold $e \geq 0$, the model $\mathbf{w}^{\approx}$ is an approximately certain model (ACM) if the following condition holds:
\begin{equation}\label{equation:apprximateCertainModel}
\forall \mathbf{X}^r \in \mathbf{X}^R, L(f(\mathbf{X}^{r}, \mathbf{w}^{\approx}), \mathbf{y}) - \min_{\mathbf{w} \in \mathcal{W}} L(f(\mathbf{X}^{r}, \mathbf{w}), \mathbf{y}) \leq e
\end{equation}
where $\textbf{X}^{r}$ is a possible repair, $\textbf{X}^{R}$ is the set of all possible repairs and $L(f(\textbf{X}^{r}, \textbf{w}), \textbf{y})$ is the loss function.
\end{definition}

\noindent
Definition~\ref{definition:approximate-certain-model} ensures that the training losses of ACMs are close to the minimal training loss for all repairs. Therefore, when $e$ is sufficiently small, ACMs are accurate for all repairs. In this scenario, data imputation is unnecessary and users can proceed with an ACM without compromising the model's performance significantly. Certain models are special cases of ACMs by setting $e = 0$.

\subsection{Learning ACMs Efficiently}

The condition in Definition~\ref{definition:approximate-certain-model} is equivalent to $g(\mathbf{w}') \leq e$ where 
\begin{equation}\label{equation:ACMcheckGivenModel}
g(\mathbf{w}') = \sup_{\mathbf{X}^r \in \mathbf{X}^R} h(\mathbf{w}', \mathbf{X}^r)
\end{equation}
and
\[
h(\mathbf{w}', \mathbf{X}^r) = L(f(\mathbf{X}^{r}, \mathbf{w}'), \mathbf{y}) - \min_{\mathbf{w} \in \mathcal{W}} L(f(\mathbf{X}^{r}, \mathbf{w}), \mathbf{y})
\]

If there is a model $\mathbf{w}' \in \mathcal{W}$ that satisfies this condition, it is an ACM. 
Hence, to find an ACM, we can check every $\mathbf{w}' \in \mathcal{W}$ for the condition in \ref{equation:ACMcheckGivenModel}. This is equivalent to checking $\min_{\mathbf{w}' \in \mathcal{W}}  g(\mathbf{w}') \leq e$.


\begin{lemma}\label{lemma:ACMconvex}
The problem $\min_{\mathbf{w}' \in \mathcal{W}}  g(\mathbf{w}') \leq e$
is convex for every model whose loss function $L(f(\mathbf{X}, \mathbf{w}), \mathbf{y})$ is convex with respect to $\mathbf{w}$.
\end{lemma}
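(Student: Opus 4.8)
The plan is to show $g$ is a convex function on $\mathcal{W}$, after which convexity of the sublevel-set problem $\min_{\mathbf{w}'} g(\mathbf{w}') \le e$ is immediate. Recall that $g(\mathbf{w}') = \sup_{\mathbf{X}^r \in \mathbf{X}^R} h(\mathbf{w}', \mathbf{X}^r)$ with $h(\mathbf{w}', \mathbf{X}^r) = L(f(\mathbf{X}^r, \mathbf{w}'), \mathbf{y}) - \min_{\mathbf{w}} L(f(\mathbf{X}^r, \mathbf{w}), \mathbf{y})$. The second term does not depend on $\mathbf{w}'$, so for each fixed repair $\mathbf{X}^r$ the map $\mathbf{w}' \mapsto h(\mathbf{w}', \mathbf{X}^r)$ is just the (assumed) convex loss $\mathbf{w}' \mapsto L(f(\mathbf{X}^r, \mathbf{w}'), \mathbf{y})$ shifted down by a constant, hence convex in $\mathbf{w}'$. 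Then $g$ is a pointwise supremum, over the index set $\mathbf{X}^R$, of a family of convex functions of $\mathbf{w}'$, and the pointwise supremum of convex functions is convex. That gives convexity of $g$.

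Concretely, the steps I would carry out are: (1) fix an arbitrary repair $\mathbf{X}^r$ and note that the subtracted term $\min_{\mathbf{w}} L(f(\mathbf{X}^r,\mathbf{w}),\mathbf{y})$ is a real constant (independent of $\mathbf{w}'$), so $h(\cdot, \mathbf{X}^r)$ is convex in its first argument exactly because $L(f(\mathbf{X}^r, \cdot), \mathbf{y})$ is; (2) invoke the standard fact that if $\{h_\alpha\}_{\alpha \in A}$ is a family of convex functions on $\mathcal{W}$ then $\mathbf{w}' \mapsto \sup_{\alpha \in A} h_\alpha(\mathbf{w}')$ is convex (its epigraph is the intersection of the convex epigraphs), applied with $A = \mathbf{X}^R$; (3) conclude $g$ is convex; (4) observe that $\mathcal{W}$ (here $\mathbb{R}^d$) is a convex set and that $\{\mathbf{w}' : g(\mathbf{w}') \le e\}$ is a sublevel set of a convex function, hence convex, so minimizing the convex $g$ over the convex $\mathcal{W}$ and comparing against $e$ is a convex feasibility/optimization problem.

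I expect there is no serious obstacle here — the result is essentially the observation that $\sup$ preserves convexity — but the one point worth stating carefully is that subtracting $\min_{\mathbf{w}} L(f(\mathbf{X}^r,\mathbf{w}),\mathbf{y})$ is harmless: it is a constant in $\mathbf{w}'$, not a function of $\mathbf{w}'$, so it neither breaks convexity nor contributes anything beyond a vertical shift (and one should note it is finite, which follows since the convex loss is bounded below, or at least attains its infimum in the cases of interest, so $h$ is well defined). A secondary subtlety is that $\mathbf{X}^R$ is typically an uncountable set, but the supremum-of-convex-functions argument is insensitive to the cardinality of the index set, so nothing changes. Thus the proof is short: convexity in $\mathbf{w}'$ for each fixed repair, then stability of convexity under pointwise suprema, then the sublevel-set remark.
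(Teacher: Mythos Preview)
Your proposal is correct and follows essentially the same argument as the paper: for each fixed repair $\mathbf{X}^r$, the map $\mathbf{w}' \mapsto h(\mathbf{w}',\mathbf{X}^r)$ is convex (loss minus a constant), and then $g$ is convex as a pointwise supremum of convex functions. Your write-up is in fact more careful than the paper's, spelling out why the subtracted term is harmless and why the cardinality of $\mathbf{X}^R$ is irrelevant.
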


\begin{proof}
Denote
\begin{equation}\label{eq:ACMlemmaproof}
h(\mathbf{w}', \mathbf{X}^r) = L(f(\mathbf{X}^{r}, \mathbf{w}'), \mathbf{y}) - \min_{\mathbf{w} \in \mathcal{W}} L(f(\mathbf{X}^{r}, \mathbf{w}), \mathbf{y})    
\end{equation}

$h(\mathbf{w}', \mathbf{X}^r)$ is convex with respect to $\mathbf{w}'$ for every $\mathbf{X}^r \in \mathbf{X}^R$. Therefore, $g(\mathbf{w}') = \sup_{\mathbf{X}^r \in \mathbf{X}^R} h(\mathbf{w}', \mathbf{X}^r)$ is convex as point-wise supremum preserves convexity.
\end{proof}
The loss functions of many types of models including linear regression and SVM are convex with respect to $\mathbf{w}$.
Thus, Lemma~\ref{lemma:ACMconvex} reduces the problem of finding ACMs to a convex optimization problem for many types of models.
Nonetheless, this problem is still challenging to solve via common techniques, e.g., gradient descent, because computing $\nabla g(\mathbf{w})$ involves finding the supremum over a large set of possible repairs $\mathbf{X}^R$.
We can reduce the search for finding the supremum to a significantly smaller subset of repairs. 

\begin{definition}(Edge Repair)
  Assume each missing value $x_{ij}$ in $\mathbf{X}$ is bounded by an interval such that $x_{ij}^{min} \leq x_{ij} \leq x_{ij}^{max}$. A repair $\mathbf{X}^e$ is an edge repair if $x_{ij}^e = x_{ij}^{min}$ or $x_{ij}^{min}$ for all missing values $x_{ij}$. $\mathbf{X}^E$ denotes the set of all possible edge repairs $\mathbf{X}^e$.    
\end{definition}

\begin{theorem}\label{theorem:ACMedgeRepair}
For linear regression and SVM, we have

\[
g(\mathbf{w}') = \sup_{\mathbf{X}^e \in \mathbf{X}^E} h(\mathbf{w}', \mathbf{X}^e)
\]

\end{theorem}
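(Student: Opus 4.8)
The plan is to show that for each fixed model $\mathbf{w}'$, the function $\mathbf{X}^r \mapsto h(\mathbf{w}', \mathbf{X}^r)$ attains its supremum over the box $\mathbf{X}^R$ at one of the corner points, i.e., at an edge repair. The key observation is that $\min_{\mathbf{w}} L(f(\mathbf{X}^r,\mathbf{w}),\mathbf{y})$ does not increase $h$ in a way that matters here: we will argue $h(\mathbf{w}',\mathbf{X}^r)$ is \emph{convex} as a function of the free repair coordinates $(x_{ij})_{x_{ij}=\textit{null}}$, and a convex function on a box (a compact convex polytope) attains its maximum at an extreme point, which is exactly an edge repair. Since $\mathbf{X}^E \subseteq \mathbf{X}^R$ gives the reverse inequality trivially, this yields the claimed equality.

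The main work is therefore to establish convexity of $h(\mathbf{w}',\cdot)$ in the repair variables. First I would treat linear regression: here $L(f(\mathbf{X}^r,\mathbf{w}'),\mathbf{y}) = \|\mathbf{X}^r\mathbf{w}' - \mathbf{y}\|_2^2$. For fixed $\mathbf{w}'$, each entry of $\mathbf{X}^r\mathbf{w}'$ is affine in the repair variables, so $L$ is a composition of a convex quadratic with an affine map, hence convex in the repair variables. The subtracted term $\min_{\mathbf{w}} \|\mathbf{X}^r \mathbf{w} - \mathbf{y}\|_2^2$ equals $\|(I - P_{\mathbf{X}^r})\mathbf{y}\|_2^2$ where $P_{\mathbf{X}^r}$ is the projection onto the column space of $\mathbf{X}^r$; this is \emph{concave} in the repair variables (it is an infimum over $\mathbf{w}$ of functions affine-composed-with-convex, i.e.\ jointly convex in $(\mathbf{w}, \mathbf{X}^r)$ only through the bilinear $\mathbf{X}^r\mathbf{w}$ — so one must be careful). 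The cleanest route is to note $h(\mathbf{w}', \mathbf{X}^r) = L(f(\mathbf{X}^r,\mathbf{w}'),\mathbf{y}) - \min_{\mathbf{w}} L(f(\mathbf{X}^r,\mathbf{w}),\mathbf{y})$ and that $-\min_{\mathbf{w}} L$ is a supremum of $-L(f(\mathbf{X}^r,\mathbf{w}),\mathbf{y})$ over $\mathbf{w}$; if each $-L(f(\cdot,\mathbf{w}),\mathbf{y})$ were convex in $\mathbf{X}^r$ we would be done, but it is concave. So instead I would argue directly: $h$ is a supremum over $\mathbf{w}$ of $\,L(f(\mathbf{X}^r,\mathbf{w}'),\mathbf{y}) - L(f(\mathbf{X}^r,\mathbf{w}),\mathbf{y})$, and for linear regression this difference expands to a quantity that is \emph{affine} in each missing entry $x_{ij}$ when all other missing entries are held fixed (the quadratic terms $(x_{ij}^r)^2(w'^2_j - w_j^2)$ cancel pairwise across coordinates? — this needs checking). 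Failing a slick argument, the fallback is the coordinate-wise claim: fixing all missing entries but one, $h$ is a one-dimensional function that is piecewise-quadratic/convex, hence maximized at an endpoint of its interval; iterating over all missing coordinates pushes the maximizer to a corner of the box.

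For SVM the same template applies: $\frac12\|\mathbf{w}'\|_2^2$ is constant in the repair, and each hinge term $\max\{0, 1 - y_i \mathbf{w}'^T\mathbf{x}_i^r\}$ is a max of two affine functions of the repair variables, hence convex; so $L(f(\mathbf{X}^r,\mathbf{w}'),\mathbf{y})$ is convex in the repair variables. The subtracted optimal-loss term again requires care, and I would handle it with the same ``supremum of differences'' or coordinate-wise reduction. The hard part — and the step I expect to be the real obstacle — is handling the $-\min_{\mathbf{w}} L$ term rigorously, since it is neither convex nor concave jointly in a convenient way; the resolution is almost certainly the coordinate-wise argument, exploiting that each missing entry enters the whole expression (both the $\mathbf{w}'$ loss and, after fixing the inner optimizer, the optimal loss) through affine or convex-piecewise terms, so that optimizing one coordinate at a time lands on $x_{ij}^{min}$ or $x_{ij}^{max}$, and a finite induction over the missing coordinates then places the global maximizer at an edge repair. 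Once convexity/endpoint-attainment is in hand, the equality $g(\mathbf{w}') = \sup_{\mathbf{X}^e \in \mathbf{X}^E} h(\mathbf{w}',\mathbf{X}^e)$ follows immediately, since $\mathbf{X}^E \subseteq \mathbf{X}^R$ gives $\sup_{\mathbf{X}^E} h \le \sup_{\mathbf{X}^R} h$ and the attainment argument gives the reverse.
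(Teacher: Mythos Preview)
Your convexity plan does not go through: for linear regression, $h(\mathbf{w}',\cdot)$ is in general \emph{not} convex in the missing entries, not even coordinatewise. A one-dimensional instance: take $n=2$, $d=1$, $\mathbf{y}=(1,0)^T$, $\mathbf{X}^r=(x,1)^T$ with $x$ the single missing value, and $\mathbf{w}'=0.3$. Then $\min_{w}\|\mathbf{X}^r w-\mathbf{y}\|_2^2=1/(x^2+1)$ and $h(0.3,x)=(0.3x-1)^2+0.09-1/(x^2+1)$; its second derivative is $0.18+(2-6x^2)/(x^2+1)^3$, which is negative near $x=1$. On the interval $[0.5,2]$ one gets $h(0.3,0.5)\approx 0.013$, $h(0.3,2)=0.05$, but $h(0.3,1.25)\approx 0.090$, so the supremum is attained strictly in the interior. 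This kills both the ``convex on a box $\Rightarrow$ max at a vertex'' route and the coordinatewise fallback (your claimed ``piecewise-quadratic/convex'' one-variable reduction is exactly what fails here). The affine-cancellation idea you flag as ``needs checking'' also fails: the quadratic term in $x_{ij}$ of $L(\mathbf{X}^r,\mathbf{w}')-L(\mathbf{X}^r,\mathbf{w})$ has coefficient $(w'_j)^2-w_j^2$, which does not vanish.

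The paper's proof takes a different route and does not attempt convexity. It argues by dominance: for a missing entry $x_{ij}$ with $w'_j\neq 0$, the first term $L(f(\mathbf{X}^r,\mathbf{w}'),\mathbf{y})$ is a quadratic in $x_{ij}$ with positive leading coefficient and hence grows without bound toward the interval endpoints, whereas the subtracted term $\min_{\mathbf{w}}L(f(\mathbf{X}^r,\mathbf{w}),\mathbf{y})$ is uniformly bounded above (for linear regression, by the residual from regressing $\mathbf{y}$ on only the complete columns, since enlarging the column space cannot increase the projection residual). Hence for sufficiently wide intervals the first term dominates and $h$ is larger at an edge than at any fixed interior repair; the SVM case is handled analogously via the unbounded hinge term. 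Note that this argument relies on a ``sufficiently wide interval'' hypothesis that the paper states only inside the proof, not in the theorem; the counterexample above shows that without such an assumption the statement is in fact false, so you should not expect any assumption-free argument (convexity-based or otherwise) to succeed.
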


\begin{proof}
To prove by contradiction, assume 1) $\mathbf{X}^r$ is not an edge repair because at least one element of $\mathbf{X}^r$ is not at the edge of the interval (i.e., $x_{ij}^{min} < x_{ij}^r < x_{ij}^{max}$), and 2) the repair $\mathbf{X}^r$ corresponds to the supremum in $g(\mathbf{w}')$:
\begin{equation}\label{equation:ACMlemmaLRproof}
 g(\mathbf{w}') = h(\mathbf{w}', \mathbf{X}^r)  
\end{equation}
Our goal is to show that there is an edge repair $\mathbf{X}^e$ such that
\[
h(\mathbf{w}', \mathbf{X}^e) > h(\mathbf{w}', \mathbf{X}^r) 
\]
contradicting to Equation \ref{equation:ACMlemmaLRproof}, the initial assumption.

To start, we recall two parts of $h(\mathbf{w}', \mathbf{X}^r)$ in Equation \ref{eq:ACMlemmaproof}. For Expanding the first part for linear regression:
\[
L(f(\mathbf{X}^{r}, \mathbf{w}'), \mathbf{y}) = \sum_{i = 1}^{n} (\mathbf{w}'^T\mathbf{x}_i^r - y_i)^2
\]
And for SVM:
\[
L(f(\mathbf{X}^{r}, \mathbf{w}'), \mathbf{y}) = \frac{1}{2}||\textbf{w}'||^2_2 + C\sum_{i=1}^{n}max\{0, 1-y_i\textbf{w}'^T\textbf{x}_i\}
\]
One can see that $L(f(\mathbf{X}^{r}, \mathbf{w}'), \mathbf{y})$ increases quadratically and becomes unbounded when $x_{ij}$ moves from $x_{ij}^r$ to an edge of the interval ($x_{ij}^{min}$ or $x_{ij}^{max}$), given that we have assumed intervals are sufficiently wide. 

On the contrary, the second part of $h(\mathbf{w}', \mathbf{X}^r)$ is bounded by a deterministic value for linear regression when $x_{ij}$ moves from $x_{ij}^r$ to an edge of the interval:
\[
\sup_{\mathbf{X}^r \in \mathbf{X}^R} [\min_{\mathbf{w} \in \mathcal{W}} L(f(\mathbf{X}^{r}, \mathbf{w}), \mathbf{y})] \leq \min_{\mathbf{w} \in \mathcal{W}} L(f(\mathbf{X}^{c}, \mathbf{w}), \mathbf{y})
\]

This upper bound is the regression residue with complete submatrix $\mathbf{X}^c$, defined in Section 4.1 of the paper. $\mathbf{X}^c$ is a submatrix of $\mathbf{X}$ created by removing all incomplete columns. The legitimacy of this upper bound is based on an interpretation of linear regression: projecting the label vector onto the column space of feature matrix. Adding column vectors to $\mathbf{X}^c$ never shrinks the column space, and thus never reduces the regression residue upon projection. 

Therefore, when $x_{ij}$ moves from $x_{ij}^r$ to $x_{ij}^e$, the first part becomes dominant and increases quadratically, leading to:
\[
h(\mathbf{w}', \mathbf{X}^e) > h(\mathbf{w}', \mathbf{X}^r) 
\]
The dominance of the first part also exists for SVM at some edge repairs. As a result, this contradicts to the original assumptions.

\end{proof}
Based on Theorem \ref{theorem:ACMedgeRepair}, we can compute $g(\mathbf{w}')$ by finding the supremum of $h(\mathbf{w}', \mathbf{X}^r)$ only from edge repairs. This approach is efficient for dataset with relatively small number of missing values.
However, it may take long for datasets with many missing values because the number of edge repairs is $2^{n(MV)}$ where $n(MV)$ is the number of missing values in $\mathbf{X}$. 

To accelerate finding ACMs for linear regression and SVM, Algorithm \ref{alg:ACM-sample} randomly samples edge repairs and estimates the supremum of $h(\mathbf{w}', \mathbf{X}^r)$. 
This estimation is reasonable when the number of samples $s$ is large. 
The algorithm's time complexity is $\mathcal{O}(k\cdot d \cdot n \cdot s)$, where $k$ stands for the number of iterations in gradient descent.

\begin{algorithm}
\caption{Learning ACM}\label{alg:ACM-sample}

\begin{algorithmic}
\State $s \gets \text{the number of edge repairs to sample}$
\State $e\gets \text{user-defined threshold for approximate optimality}$
\State $\mathbf{X}^E_{sample} \gets random.sample(\mathbf{X}^E, s)$\\
\Comment{randomly add $s$ edge repairs to the sample set}
\For{$\mathbf{X}^e \in \mathbf{X}^E_{sample}$}

\State $h(\mathbf{w}', \mathbf{X}^e) \gets L(f(\mathbf{X}^{e}, \mathbf{w}'), \mathbf{y}) - \min_{\mathbf{w} \in \mathcal{W}} L(f(\mathbf{X}^{e}, \mathbf{w}), \mathbf{y})$
\EndFor
\State $\mathbf{w}^\approx \gets arg\min_{\mathbf{w} \in \mathcal{W}} \sup_{\mathbf{X}^e \in \mathbf{X}^E_{sample}} h(\mathbf{w}', \mathbf{X}^e)$

\Comment{This optimization is solved by existing algorithms}

\If{$ g(\textbf{w}^\approx) \leq e$}
    \State \Return ${w}^\approx$ 
\Else
    \State \Return 
    \text{"Approximately certain models do not exist"}
\EndIf
\end{algorithmic}
\end{algorithm}

\begin{table*}[!t]
\centering
\caption{Details of Real World Dataset containing missing values}
\vspace{-3mm}
\label{tab:real-world-missing-charateristics}
\begin{tabular}{|c|c|c|c|c|c|}
\hline
Data Set & Task & Features & Training Examples & Missing Factor\\
\hline
{Breast Cancer} & Classification & 10 & 559 & 1.97\% \\
\hline
{Intel-Sensor} & Classification & 11 & 1850945 & 4.05\%  \\
\hline
{NFL} & Regression & 34 & 34302 & 9.04\%  \\
\hline
{Water-Potability} & Classification & 9 & 2620 & 39.00\%  \\
\hline
{Online Education} & Classification & 36 & 7026 & 35.48\%  \\
\hline
{COVID} & Regression & 188 & 60229 & 53.67\%  \\
\hline
{Air-Quality} & Regression & 12 & 7192 & 90.99\%  \\
\hline
{Communities} & Regression & 1954 & 1595 & 93.67\%  \\
\hline
\end{tabular}
\end{table*}

\subsection{ACMs for Regression With Guarantees}
For linear regression, if some conditions hold in the dataset, we can decompose the computation of the supremum for $h(\mathbf{w}', \mathbf{X}^e)$ in Theorem \ref{theorem:ACMedgeRepair} to each example and compute ACMs in linear time. 

\begin{theorem}\label{theorem:ACMLR}
For linear regression, if 
\begin{align*}
\forall i \in [1, ... , n], \forall \mathbf{x}_i^{e} \neq \mathbf{x}_i^{e*}, L(f(\mathbf{x}_i^{e*}, \mathbf{w}'), \mathbf{y_i}) & - L(f(\mathbf{x}_i^{e}, \mathbf{w}'), \mathbf{y_i}) \geq \\
&\min_{\mathbf{w} \in \mathcal{W}} L(f(\mathbf{x_i}_c, \mathbf{w}), \mathbf{y}) 
\end{align*}

where ${x_i}_c$ is created by ignoring features with missing values in $x_i$, then

\[
g(\mathbf{w}') = L(f(\mathbf{X}^{e*}, \mathbf{w}'), \mathbf{y}) - \min_{\mathbf{w} \in \mathcal{W}} L(f(\mathbf{X}^{e*}, \mathbf{w}), \mathbf{y})
\]

where 
\[
\forall i \in [1, ... , n], \mathbf{x}_i^{e*} = arg \max_{\mathbf{x}_i^e \in \mathbf{x}_i^E} (\mathbf{w}'^T\mathbf{x}_i^e - y_i)^2
\]
and $\mathbf{x}_i^E$ is the set of edge repairs for $x_i$.
\end{theorem}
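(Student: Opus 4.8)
The plan is to invoke Theorem~\ref{theorem:ACMedgeRepair} to reduce the supremum defining $g(\mathbf{w}')$ to edge repairs, then exploit the fact that the first term of $h$ is, for linear regression, a sum of independent per-example contributions, while controlling the non-separable residual term with the global bound already established inside the proof of Theorem~\ref{theorem:ACMedgeRepair}. Throughout, write $R(\mathbf{X}^r)=\min_{\mathbf{w}\in\mathcal{W}}L(f(\mathbf{X}^r,\mathbf{w}),\mathbf{y})$ for the least-squares residual of a repair, and read the right-hand side of the hypothesis, $\min_{\mathbf{w}\in\mathcal{W}}L(f(\mathbf{x_i}_c,\mathbf{w}),\mathbf{y})$, as $R(\mathbf{X}_c)$, the residual of the complete submatrix $\mathbf{X}_c$ obtained from $\mathbf{X}$ by deleting every incomplete column (the object used in the proof of Theorem~\ref{theorem:ACMedgeRepair}).

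First I would record two structural facts. (i) For linear regression $L(f(\mathbf{X}^e,\mathbf{w}'),\mathbf{y})=\sum_{i=1}^{n}(\mathbf{w}'^T\mathbf{x}_i^e-y_i)^2$ is a sum of per-example terms, and the edge repairs of distinct examples vary independently (they touch disjoint sets of missing entries); hence the edge repair $\mathbf{X}^{e*}$ whose $i$-th row is $\mathbf{x}_i^{e*}=\arg\max_{\mathbf{x}_i^e\in\mathbf{x}_i^E}(\mathbf{w}'^T\mathbf{x}_i^e-y_i)^2$ lies in $\mathbf{X}^E$ and maximizes this first term jointly over all of $\mathbf{X}^E$, not merely row by row. (ii) Since the columns of $\mathbf{X}_c$ are a subset of the columns of $\mathbf{X}^{e*}$, the column-space / projection argument used in the proof of Theorem~\ref{theorem:ACMedgeRepair} gives $R(\mathbf{X}^{e*})\le R(\mathbf{X}_c)$, and trivially $R(\mathbf{X}^e)\ge 0$ for every repair.

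By Theorem~\ref{theorem:ACMedgeRepair} it then suffices to prove $h(\mathbf{w}',\mathbf{X}^{e*})\ge h(\mathbf{w}',\mathbf{X}^e)$ for every $\mathbf{X}^e\in\mathbf{X}^E$. Fix such an $\mathbf{X}^e$ and set $S=\{\,i:\mathbf{x}_i^e\neq\mathbf{x}_i^{e*}\,\}$; if $S=\emptyset$ this is an equality. Otherwise, cancelling the rows outside $S$, the difference $h(\mathbf{w}',\mathbf{X}^{e*})-h(\mathbf{w}',\mathbf{X}^e)$ equals $\sum_{i\in S}\bigl[(\mathbf{w}'^T\mathbf{x}_i^{e*}-y_i)^2-(\mathbf{w}'^T\mathbf{x}_i^e-y_i)^2\bigr]-\bigl(R(\mathbf{X}^{e*})-R(\mathbf{X}^e)\bigr)$. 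Each bracket in the sum is, by hypothesis, at least $R(\mathbf{X}_c)$, so the sum is at least $|S|\,R(\mathbf{X}_c)\ge R(\mathbf{X}_c)$; meanwhile $R(\mathbf{X}^{e*})-R(\mathbf{X}^e)\le R(\mathbf{X}^{e*})\le R(\mathbf{X}_c)$ by fact (ii). Hence the difference is $\ge R(\mathbf{X}_c)-R(\mathbf{X}_c)=0$, i.e. the supremum in $g(\mathbf{w}')$ is attained at $\mathbf{X}^{e*}$, which is exactly the claimed formula.

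The main obstacle is precisely the middle step: $R(\mathbf{X}^e)$ does not decompose over examples, so the gain earned in the separable term by sliding rows off their maximizers must be matched against a quantity that is inherently global. The argument goes through because the hypothesis is calibrated to the single quantity $R(\mathbf{X}_c)$ that simultaneously upper-bounds $R$ over all repairs and lower-bounds every per-example gain. Secondary points to check carefully are that $\mathbf{X}^{e*}\in\mathbf{X}^E$ and that it maximizes the first term of $h$ jointly (both follow from the independence of the per-example edge sets), and the harmless edge case of examples with no missing value, for which $\mathbf{x}_i^E$ is a singleton, the hypothesis is vacuous, and $i$ never enters $S$.
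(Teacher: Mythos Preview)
Your proof is correct and rests on the same two ingredients as the paper's: the per-example separability of the squared-error term (so $\mathbf{X}^{e*}$ maximizes the first term of $h$ row by row and hence jointly) and the column-space bound $R(\mathbf{X}^{e*})\le R(\mathbf{X}_c)$ on the non-separable residual term. Where the paper's proof merely sketches why these two facts suffice and leaves the role of the hypothesis to the surrounding prose, you make the bridge explicit---bounding each per-example gain below by $R(\mathbf{X}_c)$ and the residual difference above by the same quantity---so your argument is the more careful of the two.
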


\begin{proof}
Firstly, given a model $\mathbf{w}'$, the training loss with any repair $\mathbf{X}^r \in \mathbf{X}^R$  is bigger than $\min_{\mathbf{w} \in \mathcal{W}} L(f(\mathbf{X}_c, \mathbf{w}), \mathbf{y})$. This is because the training a linear regression model is projecting the label vector onto the column space, and adding any columns to $\mathbf{X}_c$ never increases the training loss compared to $\min_{\mathbf{w} \in \mathcal{W}} L(f(\mathbf{X}_c, \mathbf{w}), \mathbf{y})$. Further, each training example in the repair $\mathbf{X}^{e*}$ must maximize the training loss with respect to the corresponding because the total training loss is the summation of training losses from independent training examples. Therefore, $\mathbf{X}^{e*}$ is computed by finding $\mathbf{x}_i^{e*}$ separately.
\end{proof}

Because training examples are independent, $\mathbf{X}^{e*}$ maximizes the overall training loss if and only if each training example in $\mathbf{X}^{e*}$ maximizes the squared error for the example. Further, when the latter condition in the theorem holds, $h(\mathbf{w}', \mathbf{X}^{e*})$ is also the supremum of $h(\mathbf{w}', \mathbf{X}^e)$.
It is because this condition ensures that the training loss term is absolutely dominant in $h(\mathbf{w}', \mathbf{X}^e)$. 
This allows us to find the supremum edge repair for each training example individually. 

Algorithm \ref{alg:ACM-linear_regression} uses this result to efficiently compute ACMs for linear regression.
It uses the common gradient descent algorithm as $g(\mathbf{w})$ is convex. 
Its time complexity is $\mathcal{O}(k\cdot d \cdot n)$.
The latter condition in Theorem \ref{theorem:ACMLR} is checked in linear time.

\begin{algorithm}
\caption{Learning ACMs for Linear Regression}\label{alg:ACM-linear_regression}

\begin{algorithmic}
\State $\mathbf{w}^{(0)} \gets \mathbf{w}^{init}$
\State $t \gets 0$
\State $n \gets \text{the number of training examples}$
\State $e\gets \text{user-defined threshold for approximate optimality}$

\While{$||\nabla g(\textbf{w}^{(t)})|| > \epsilon$}
    \State $t \gets t + 1$
    \For {$i = 1, 2, ..., n$}
    \State $\mathbf{x}_i^{\text{e*}} \gets \operatorname*{argmax}_{\mathbf{x}_i^{\text{e}}} ||\mathbf{w}^{(t-1)T}\mathbf{x}_i^{\text{e}} -y_i||^2_2$


    \EndFor
	\State $\nabla g(\textbf{w}^{(t-1)}) \gets \nabla L(f(\mathbf{X}^{e*}, \mathbf{w}^{(t-1)})$
	\State $\mathbf{w}^{(t)} \gets \mathbf{w}^{(t-1)} - \eta \nabla g(\textbf{w}^{(t-1)})$
\EndWhile

\If{$ g(\textbf{w}^{(t)}) \leq e$}
    \State \Return ${w}^{(t)}$ 
\Else
    \State \Return 
    \text{"Approximately certain models do not exist"}
\EndIf
\end{algorithmic}
\end{algorithm}

\paragraph{ACM for kernel SVM} Many properties in linear regression and linear SVM, such as the linearity that is used to prove Theorem \ref{theorem:ACMedgeRepair}, do not hold for kernel SVM. Therefore, it is very challenging to efficiently compute $g(\mathbf{w}')$ and check ACM for kernel SVM. We plan to put this line of research as the future work.

\section{Experimental Evaluation}\label{sec:experimental-evaluation}
\begin{table*}[!t]
\centering
\caption{Linear SVM: Comparing Performance on Randomly Corrupted Real-World Datasets}
\vspace{-3mm}
\label{tab:LSVM-randCorrputed}
\begin{subtable}
{\textwidth}
\centering
\caption{Data Sets Where Certain Models Exist}
\label{tab:LSVM-randCorrputedCM}
\begin{tabular}{|c|c|cc|cccccc|ccccc|}
\hline
\multirow{2}{*}{Data Set} & \multirow{2}{*}{MF} & \multicolumn{2}{c|}{\textbf{Examples Cleaned}} & \multicolumn{6}{c|}{Time (Sec)} & \multicolumn{5}{c|}{Accuracy (\%)} \\
\cline{3-15}
 &  & AC & MI/KI/DI & AC & CM & DI & KI & MI & NI & AC & DI & KI & MI & CM/NI \\
\hline
 \multirow{2}{*}{Gisette} & 0.1\% & 6.07  & 14&17.48 & 2.14 & N/A & 4.21 & 1.43 & \textbf{0.90} & \textbf{97.94} & N/A & 96.60 & 97.60 & 97.40 \\
& 1\% & 60.40 & 135 & 20.32 & 2.18 & N/A & 17.88 &1.42 & \textbf{0.88} & \textbf{97.89} & N/A & 97.60 &97.60 & 97.33 \\
\hline
\multirow{3}{*}{Malware} & 0.1\% & 1.0  & \textbf{2} & 4.56 & 0.73 &N/A & 0.96 & 0.74 & \textbf{0.34} & 96.09 & N/A & \textbf{96.24} & \textbf{96.24} & \textbf{96.24} \\
& 1\% & 14.3  & 20 & 3.93 & 0.86 & N/A & 1.56 & 0.73 & \textbf{0.44} & 96.10 &  N/A & \textbf{96.24} &\textbf{96.24} & \textbf{96.24} \\
& 5\% & 44.93 & 200 & 3.17 & 0.78 & N/A & 3.99 & 0.72 & \textbf{0.36} & 96.54 &  N/A & 96.24 & 96.24 & \textbf{96.57} \\
\hline
\multirow{2}{*}{Tuandromd} & 0.1\% & 3 & 5 & 3.71 & 0.04 & 62.17 & 0.17 & 0.05 & \textbf{0.04} & 98.73 & \textbf{98.86} & 98.09 & 98.09 & 98.76 \\
& 1\% & 30.9 & 45 & 3.72 & \textbf{0.03} & 78.81 & 0.29 & 0.04 & \textbf{0.03} & 98.88 & \textbf{98.92} & 98.80 & 98.76 & 98.58 \\
\hline
\end{tabular}
\end{subtable}

\vspace{10pt}

\begin{subtable}
{\textwidth}
\centering
\caption{Data Set Where Certain Models Do Not Exist but Approximately Certain Models Exist}
\label{tab:LSVMrandCorruptednoCMbutACM}
\begin{tabular}{|c|c|cc|ccccccc|cccccc|}
\hline
\multirow{2}{*}{Data Set} & \multirow{2}{*}{MF} & \multicolumn{2}{c|}{\textbf{Examples Cleaned}} & \multicolumn{7}{c|}{Time (Sec)} & \multicolumn{6}{c|}{Accuracy (\%)} \\
\cline{3-17}
 &  & AC & MI/KI/DI & AC & CM & ACM & DI & KI & MI & NI & AC & DI & KI & MI & NI & ACM\\
\hline
\multirow{2}{*}{Gisette} 
& 5\% & 248.93 & 675 & 17.62 & 1.82 & 10.37 & N/A & 53.97 & 1.45 & \textbf{0.71} & 97.03 & N/A & \textbf{97.60} & 97.43 & 97.30 & 97.35\\
& 10\% & 393.33 & 1350 & 1.73 & 1.73 & 12.94 & N/A & 97.01 & 1.40 & \textbf{0.65} & \textbf{99.93} & N/A & 97.00 & 97.53 & 97.07 & 97.60\\
\hline
\multirow{1}{*}{Tuandromd} 
& 5\% & 91.40 & 223 & 2.41 & \textbf{0.04} & 4.31 & 74.21 & 0.44 & 0.05 & \textbf{0.04} & 98.08 & \textbf{98.76} & 98.36 & 98.36 & 98.21 & 98.18\\
\hline
\end{tabular}
\end{subtable}

\vspace{10pt}

\begin{subtable}{\textwidth}
\centering
\caption{Data Set Where Neither Certain Nor Approximately Certain Models Exist}
\label{tab:LSVMrandCorruptedneither}
\begin{tabular}{|c|c|cc|ccccccc|ccccc|}
\hline
\multirow{2}{*}{Data Set} & \multirow{2}{*}{MF} & \multicolumn{2}{c|}{\textbf{Examples Cleaned}} & \multicolumn{7}{c|}{Time (Sec)} & \multicolumn{5}{c|}{Accuracy (\%)} \\
\cline{3-16}
 &  & AC & MI/KI/DI & AC & \textbf{CM} & ACM & DI & KI & MI & NI & AC & DI & KI & MI & NI \\
\hline
\multirow{1}{*}{Malware} 
& 10\% & 66.0 & 200 & 1.97 & 0.70 & 7.16 & N/A & 6.78 & 0.73 & \textbf{0.33} & 88.05 & N/A & \textbf{96.24} & \textbf{96.24} & 83.95 \\
\hline
\multirow{1}{*}{Tuandromd} 
& 10\% & 121.33 & 446 & 1.64 & \textbf{0.04} & 4.98 & 82.45 & 0.74 & 0.05 & \textbf{0.04} & 97.36 & \textbf{98.76} & \textbf{98.76} & 98.20 & 98.54 \\
\hline
\end{tabular}
\end{subtable}

\end{table*}

\begin{table*}[!t]
\centering
\caption{p-SVM: Comparing Performance on Randomly Corrupted Real-World Datasets}
\vspace{-3mm}
\label{tab:pSVMrandCorrupted}
\begin{subtable}{\textwidth}
\centering
\caption{Data Sets Where Certain Models Exist}
\label{tab:pSVMrandCorruptedCM}
\begin{tabular}{|c|c|c|ccccc|cccc|}
\hline
\multirow{2}{*}{Data Set} & \multirow{2}{*}{MF} & \multicolumn{1}{c|}{\textbf{Examples Cleaned}} & \multicolumn{5}{c|}{Time (Sec)} & \multicolumn{4}{c|}{Accuracy (\%)} \\
\cline{3-12}
 &   & MI/KI/DI  & CM & DI & KI & MI & NI  & DI & KI & MI & CM/NI \\
\hline
\multirow{1}{*}{Gisette} & 0.1\%  & 14 & 55.69 & N/A & 56.61 & 56.01 & \textbf{55.27} &  N/A & \textbf{96.70} & \textbf{96.70} & \textbf{96.70} \\

\hline
\multirow{1}{*}{Malware} & 0.1\%   & 2& 4.98 &N/A & 4.95 & 5.01 & \textbf{4.66}  & N/A & \textbf{92.98} & \textbf{92.98} & \textbf{92.98} \\

\hline
\multirow{1}{*}{Tuandromd} & 0.1\%  & 5 &  \textbf{0.20} & 49.24 & 0.21 & 0.21 & \textbf{0.20} &  98.54 & \textbf{98.54} & \textbf{98.54} & \textbf{98.54} \\

\hline
\end{tabular}
\end{subtable}

\vspace{10pt}

\begin{subtable}{\textwidth}
\centering
\caption{Data Set Where Certain Models Do Not Exist}
\label{tab:pSVMrandCorruptednoCM}
\begin{tabular}{|c|c|c|ccccc|cccc|}
\hline
\multirow{2}{*}{Data Set} & \multirow{2}{*}{MF} & \multicolumn{1}{c|}{\textbf{Examples Cleaned}} & \multicolumn{5}{c|}{Time (Sec)} & \multicolumn{4}{c|}{Accuracy (\%)} \\
\cline{3-12}
 &  &  MI/KI/DI & \textbf{CM} & DI & KI & MI & NI & DI & KI & MI & \textbf{CM/NI} \\
\hline
\multirow{2}{*}{Gisette} & 0.1\% &  14 &  \textbf{43.79} & N/A &52.33 & 51.69 & 51.32 &  N/A & \textbf{96.80} & 96.70 & 96.70 \\
& 1\% &  135  & \textbf{42.84} & N/A & 60.04 & 51.20 & 50.04  & N/A & \textbf{96.70} & \textbf{96.70} & \textbf{96.70} \\
& 5\% &  675 &  \textbf{36.15} & N/A & 85.98 & 42.95 & 37.12 &  N/A & 96.75 & 96.70 & \textbf{96.80} \\
& 10\% &  1350 &  \textbf{36.94} & N/A & 115.80 & 45.27 & 41.63 &  N/A & 96.70 & 96.70 & \textbf{97.00} \\
\hline
\multirow{3}{*}{Malware} & 0.1\% & \textbf{2} & \textbf{4.35} & N/A & 5.64 & 5.18 & 4.35 &  N/A & \textbf{92.87} & \textbf{92.87} & 92.75 \\
& 1\% &  20 &  \textbf{4.13} & N/A & 6.76 & 4.80 & 4.53&  N/A & \textbf{92.98} & \textbf{92.98} & \textbf{92.98} \\
& 5\% & 100 & 4.84 & N/A & 7.90 & 4.81 & \textbf{4.14} &  N/A & \textbf{92.98} & \textbf{92.98} & \textbf{92.98} \\
& 10\% & 200 & \textbf{4.60} & N/A & 9.79 & 5.81 & 4.84 &  N/A & \textbf{92.98} & 92.74 & \textbf{92.98} \\
\hline
\multirow{2}{*}{Tuandromd} & 0.1\% &  5 &  0.20 & 70.22 & 0.21 & 0.21 & \textbf{0.19} & \textbf{98.54} & \textbf{98.54} & \textbf{98.54} & \textbf{98.54} \\
& 1\% &  45 & \textbf{0.18} & 44.81 & 0.30 & 0.21 & 0.19 &  \textbf{98.54} & \textbf{98.54} & \textbf{98.54} & \textbf{98.54} \\
& 5\% &  223 &  0.18 & 42.39 & 0.54 & 0.20 & \textbf{0.17} &  \textbf{98.54} & \textbf{98.54} & \textbf{98.54} & 98.31 \\
& 10\% &  446 &  0.19 & 42.45 & 0.75 & 0.21 & \textbf{0.16} &  98.54 & \textbf{98.79} & 98.54 & 98.54 \\
\hline
\end{tabular}
\end{subtable}

\end{table*}

We conduct experiments on a diverse set of real-world datasets and compare our algorithms with two natural baselines, a KNN imputation method, a deep learning-based imputation algorithm, and a benchmark method, ActiveClean. Our findings illustrate substantial savings in data cleaning costs and program running times when certain and approximately certain models exist. Moreover, our study highlights the minimal computational overhead incurred by our algorithms when verifying certain and approximately certain model conditions, even when these models do not exist.

\vspace{-3mm}
\subsection{Experimental Setup}
\subsubsection{Hardware and Platform}
We experiment on a configuration with two tasks, each utilizing two CPUs, and running on a cluster partition equipped with one 11GB GPU. The underlying hardware consists of Intel(R) Xeon(R) CPU E5-2630 v4 @ 2.20GHz machines.

\subsubsection{Real-world Datasets with randomly generated missing values}
\label{sec:information-randomly-generated-missing}

In our certain model experiments with linear SVM, polynomial SVM, and DNN(arcosine SVM) we utilize three real-world datasets. These datasets originally do not contain any missing values, but we introduce corruption by randomly injecting missing values at missing factors of $0.1\%$, $1\%$, $5\%$, and $10\%$. Where \textbf{missing factor (MF) represents the ratio of incomplete examples (examples with at least 1 missing value) to the total number of examples}. It is important to note that certain models do not exist in all versions of the corrupted datasets. Therefore, we present experimental results for both scenarios, {\it when certain models exist and when they do not}. For each dataset and each missing factor, we present the average results based on three randomly corrupted versions of the dataset in which certain models exist. This is to reduce the variability in algorithm performance resulting from the randomness of missing value injection.
\vspace{-2mm}
\paragraph{{\it Malware Dataset}}
The Malware dataset aims to distinguish between malware and benign software through the analysis of JAR files \cite{Malware:UCIDatabase}. It comprises $6825$ features and $1996$ training examples.

\paragraph{{\it Gisette Dataset}}
The Gisette dataset addresses the problem of handwritten digit recognition, with a specific focus on distinguishing between the easily confused digits $4$ and $9$ \cite{Gisette:UCIDatabase}. It consists of $13500$ training examples and $5000$ features.

\paragraph{{\it TUANDROMD Dataset}}
The TUANDROMD dataset is designed for the detection of Android malware software in contrast to benign or "goodware" applications \cite{borah2020malware}. It comprises $4464$ training examples and incorporates $241$ distinct features.

\subsubsection{Real-world Datasets originally containing missing values}\label{sec:real-world-with-missing-values}
We also conduct experiments on 8 real-world datasets originally containing missing values. Our selection includes datasets from diverse domains and {\it missing factors} (Section~\ref{sec:information-randomly-generated-missing}). \autoref{tab:real-world-missing-charateristics} presents a summary of the datasets. For preprocessing the dataset if the label is missing we drop all corresponding examples and utilize {\it sklearns} OneHotEncoder to featurize the categorical attributes. 

\paragraph{{\it Intel Sensor}} This dataset contains temperature, humidity, and light readings collected from sensors deployed in the Intel Berkeley Research lab~\cite{Intel_Sensor,krishnan2017boostclean}. The classification task is to predict whether the readings came from a particular sensor (sensor 49).

\paragraph{{\it Water Potability }}
The dataset contains information about the properties and substances (sulfate, pH) in freshwater sources, the classification task is to predict if the water is potable or not~\cite{water-potability}. 

\paragraph{{\it COVID }} This U.S. Department of Health and Human Services dataset provides data for hospital utilization dating back to January 1, 2020~\cite{healthdata-covid19}. 
The regression task is to predict the number of hospitals anticipating critical staffing shortages.

\paragraph{{\it Air Quality}} The dataset contains instances of hourly averaged responses from an array of chemical sensors embedded in an Air Quality Chemical Multisensor Device~\cite{misc_air_quality_360}. Given air-composition measurements, the regression task is to predict hourly Temperature.

\paragraph{{\it NFL}}This dataset contains play-by-play logs from US Football games. Given a play, the regression objective is to predict the score difference between the two teams~\cite{NFL2015}. We use a numeric version of this dataset since encoding string-valued attributes inflates the feature dimension by 80 times.

\paragraph{{\it Breast Cancer}}
The dataset contains information about characteristics of potential cancerous tissue and the classification task is to predict if it is benign or malignant~\cite{misc_breast_cancer_wisconsin_(original)_15}. 

\paragraph{{\it Online Education}}
This dataset contains responses to a survey on online education. The classification objective is to predict whether students prefer cellphones or laptops for online courses~\cite {OnlineEducation}. 

\paragraph{{\it Communities-Crimes}}
The data contains socio-economic and crime data from the US Census and FBI. The regression task is to predict the total number of violent crimes per 100K population~\cite{misc_communities_and_crime_183}. 



\subsubsection{Algorithms for Comparison}
\label{sec:algorithms_for_comparison}
In our experiments, we include two natural imputation baselines, a deep learning-based imputation algorithm, and a benchmark algorithm for comparison.

\noindent
\textbf{Active Clean(AC):} ActiveClean \cite{activeclean:github, krishnan2016activeclean} aims at minimizing the number of repaired examples to achieve an accurate model. We use ActiveClean for linear regression and linear SVM, but not for kernel SVM because ActiveClean's implementation relies on sklearn's SGDClassifier module, which does not support non-linear models such as kernel SVM \cite{activeclean:github}. Simply switching to sklearn's Support Vector Classification (SVC) module cannot resolve the issue since SVC does not support `partial fit', an essential function in ActiveClean. 

\noindent
\textbf{KNN-Imputer(KI):} This method predicts the values of missing items based on observed examples using a KNN classifier ~\cite{articleKNN,pmlr-v97-mattei19a}.

\noindent
\textbf{Deep-learning based Imputation (DI):}
We utilize MIWAE~\cite{pmlr-v97-mattei19a} as a sophisticated state-of-the-art imputation algorithm for comparison. This approach uses deep latent variable models to predict the values of missing data items based on the value of observed examples. Specifically, MIWAE adapts the objective of importance-weighted autoencoders~\cite{burda2016importance} and maximizes a potentially tight lower bound of the log-likelihood of the observed data.

\noindent
\textbf{Mean Imputation(MI):} MI is a widely used method for handling missing values in practice. Each missing feature value is imputed with the mean value of that feature~\cite{pmlr-v97-mattei19a}. 

\noindent
\textbf{No Imputation(NI):} NI naively drops all missing values and trains the model on the complete training set~\cite{krishnan2016activeclean}. {\it When certain model exists} the model trained with NI is equivalent to Certain Model algorithms.

\vspace{-3mm}
\subsubsection{Metrics}
We evaluate all algorithms on a held-out test set with complete examples. We use accuracy as a metric for classification tasks. Accuracy reflects the percentage of total correct class predictions therefore {\it higher accuracy is preferred}. For regression tasks, we use mean squared error (MSE). MSE measures the average squared difference between actual and predicted values. Therefore, a {\it lower MSE is preferred}. We also study the algorithms' data cleaning efforts in terms of program execution time and the number of examples cleaned.

\subsection{Results on Real-world Datasets with Random Corruption}
\label{sec:results-random-corruption}

In this section, we present results for all scenarios, {\it when certain models exist, when certain models do not exist but approximately certain models exist, and when neither exists} for 3 datasets with different degrees of random corruption (Section \ref{sec:information-randomly-generated-missing}).

\noindent
\textbf{When certain models exist :} We begin by focusing on scenarios where certain models are known to exist. Tables \ref{tab:LSVM-randCorrputedCM} and \ref{tab:pSVMrandCorruptedCM} present a performance comparison between certain model algorithms and baselines for linear SVM and p-SVM, respectively. We observe that certain models exist when the missing factor is small. This is because certain SVM models require that incomplete examples should not be support vectors in any repair (Section \ref{sec:linear-svm}). When the missing factor is small, the number of incomplete examples is also small. Therefore, the likelihood of such examples being support vectors is also small. Since certain models exist, certain model algorithms by definition (Section \ref{sec:certain-models}) clean zero missing data. 
In contrast, baseline methods spend substantial effort on data cleaning, represented by {\it examples cleaned} column in Tables \ref{tab:LSVM-randCorrputedCM} and \ref{tab:pSVMrandCorruptedCM}.
In terms of program execution time, certain model (CM) algorithms are slower than simple methods such as Mean Imputation (MI) and No Imputation (NI). However, MI and NI are both heuristics without any guarantees of the optimality of the trained model. Moreover, to find the appropriate imputation method for a specific incomplete dataset, users may want to check missing data mechanisms, which often take longer time than implementing certain model algorithms. Also, MI {\it still requires a large number of imputations}. Compared with ActiveClean (AC) and the advanced imputation methods, Deep-learning based Imputer (DI) and KNN-Imputer (KI), certain model algorithms run much faster and also guarantee an optimal model. In the tables, DI has "N/A" results for some datasets in this section when it runs for more than one hour but fails to return a result. To emphasize, {\it when certain models exist, we do not need to check for approximately certain models since they exist by definition} (Section \ref{sec:Approximate}). 



\noindent
\textbf{When certain models do not exist but approximately certain models exist}:
When certain models do not exist, we may resort to approximately certain models (ACM). Table \ref{tab:LSVMrandCorruptednoCMbutACM} shows the datasets where certain models do not exist due to the strict conditions, but approximately certain models exist and clean zero missing data. The prediction accuracy of approximately certain models is very close to the results from all baseline methods. This is because when approximately certain models exist, their approximate optimality is guaranteed. In terms of program execution time, approximately certain model algorithms run faster than DI and KI, but slower than AC, MI, and NI. However still in this scenario, checking and learning approximately certain models saves imputation costs with minimal compromise on the model's accuracy.

\noindent
\textbf{When neither certain nor approximately certain models exist:} Sometimes, neither certain nor approximately certain model may exist, as shown in Table \ref{tab:LSVMrandCorruptedneither} for linear SVM with relatively large missing factor. Since approximately certain model algorithms are not available for kernel SVMs, in Tables \ref{tab:pSVMrandCorruptednoCM} and \ref{tab:arccosrandCorrputedSVMnoCM} we present the datasets where certain models do not exist for polynomial SVM (p-SVM), and our DNN approximation with arccosine SVM, respectively. However, even if our algorithms do not find certain models for DNN, {\it certain models may still exist}. This is because the related theorem for DNN is {\it necessary but not sufficient} as described in Section \ref{sec:DNN}. We also investigate certain model existence for RBF-SVM. We observe similar patterns and results to that of arccos-SVM. Due to the limited space, we exclude the results for RBF-SVM from the paper. In scenarios when neither certain model nor approximately certain model exists, checking for them incurs some computational overhead. Nonetheless, \textit{paying for this overhead is worthwhile} for two reasons. First, substantial data-cleaning savings are realized when certain models exist (as we discuss in the previous paragraphs). Second, the time costs associated with checking certain models are minimal (confirmed by the small program running times in Tables \ref{tab:LSVMrandCorruptedneither}, \ref{tab:pSVMrandCorruptednoCM}, and \ref{tab:arccosrandCorrputedSVMnoCM}).

\begin{table*}[!t]
\centering
\caption{DNN: Comparing Performance on Randomly Corrupted Real-World Datasets: CMs are not found}
\vspace{-3mm}
\label{tab:arccosrandCorrputedSVMnoCM}

\begin{tabular}{|c|c|c|ccccc|cccc|}
\hline
\multirow{2}{*}{Data Set} & \multirow{2}{*}{MF} & \multicolumn{1}{c|}{\textbf{Examples Cleaned}} & \multicolumn{5}{c|}{Time (Sec)} & \multicolumn{4}{c|}{Accuracy (\%)} \\
\cline{3-12}
 &  &  MI/KI/DI & \textbf{CM} & DI & KI & MI & NI & DI & KI & MI & \textbf{NI} \\
\hline
\multirow{2}{*}{Gisette} & 0.1\% &  14 &  \textbf{31.27} & N/A &40.79 & 40.62 & 39.67 &  N/A & 52.30 & 52.30 & 52.10 \\
& 1\% &  135  & \textbf{35.46} & N/A & 40.69 & 40.63 & 39.57  & N/A & 52.30 & 52.30 & 52.10 \\
& 5\% &  675 &  \textbf{33.28} & N/A & 39.91 & 39.67 & 36.06 &  N/A & 52.30 & 52.30 & 52.00 \\
& 10\% &  1350 &  \textbf{29.98} & N/A & 39.94 & 39.84 & 34.10 & N/A & 52.30 & 52.30 & 51.70 \\
\hline
\multirow{3}{*}{Malware} & 0.1\% & \textbf{2} & \textbf{3.27} & N/A & 6.02 & 6.06 & 5.93 &  N/A & 48.37 & 48.37 & 48.37 \\
& 1\% &  20 &  \textbf{3.15} & N/A & 5.99 & 6.78 & 5.76&  N/A & 48.37 & 48.37 & 48.37 \\
& 5\% & 100 &  \textbf{4.03} & N/A & 5.98 & 6.00 & 5.39 &  N/A & 48.37 & 48.37 & 48.37\\
& 10\% & 200 & \textbf{4.12} & N/A & 5.98 & 5.96 & 4.92 &  N/A & 48.37 & 48.37 & 34.09 \\
\hline
\multirow{2}{*}{Tuandromd} & 0.1\% &  5 &  \textbf{0.22} & 47.56 & 0.21 & 0.28 & 0.20 & 61.81 & 61.81 & 61.81 & 61.81 \\
& 1\% &  45 & \textbf{0.25} & 69.24 & 0.21 & 0.20 & 0.20 &  61.81 & 61.81 & 61.81 & 61.81\\
& 5\% &  223 &  \textbf{0.20} & 67.94 & 0.22 & 0.21 & 0.18 &  61.81 & 61.81 & 61.81 & 61.81 \\
& 10\% &  446 &  \textbf{0.18} & 70.13 & 0.25 & 0.26 & 0.17 &  61.81 & 61.81 & 61.81 & 66.52 \\
\hline
\end{tabular}

\end{table*}

\begin{table*}[!t]
\centering
\caption{Linear Regression: Performance Comparison on Real-World Datasets with Missing Values}
\label{tab:linear-regression-performance}
\vspace{-3mm}

\begin{subtable}{\textwidth}
\centering
\caption{Data Sets Where Certain Models Exist}
\label{tab:LRcertainmodelexist}
\begin{tabular}{|c|cc|cccccc|ccccc|}
\hline
\multirow{2}{*}{Data Set}  & \multicolumn{2}{c|}{\textbf{Examples Cleaned}} & \multicolumn{6}{c|}{Time (Sec)} & \multicolumn{5}{c|}{MSE} \\
\cline{2-14}
 &  AC & MI/KI/DI & AC & CM & DI & KI & MI & NI & AC & DI & KI &MI & CM/NI\\
\hline
NFL & 12.0 & 3101 & 49.91 & 7.11 & 394.18 & 12.96 & 0.14 & \textbf{0.13} & 0.02 & \textbf{0.00} & \textbf{0.00} & \textbf{0.00} & \textbf{0.00} \\
\hline
COVID  &33.6  &32325  & 100.59 & 438.79 & 1944.10 & 587.87  &0.68  & \textbf{0.28}  &2.07 & \textbf{0.00} & \textbf{0.00} & \textbf{0.00}  & \textbf{0.00}  \\
\hline
\end{tabular}
\end{subtable}

\vspace{10pt}

\begin{subtable}{\textwidth}
\centering
\caption{Data Set Where Certain Models Do Not Exist but Approximately Certain Models Exist}
\label{tab:LRnoCMbutACM}
\begin{tabular}{|c|cc|cccccccc|cccccc|}
\hline
\multirow{2}{*}{Data Set} & \multicolumn{2}{c|}{\textbf{Examples Cleaned}} & \multicolumn{8}{c|}{Time (Sec)} & \multicolumn{6}{c|}{MSE} \\
\cline{2-17}
 & AC & MI/KI/DI & AC & CM & ACM5 & ACM6 & DI & KI & MI & NI & AC & DI & KI & MI & NI & ACM\\
\hline
Communities & 319.6 & 1494 & 2.10 & 1.45 & 4.15 & 3.74 & 4088.46 & 15.82 & 1.39 & \textbf{0.08} & 0.06 & 0.35 & 0.63 & 1.30 & 2.30 & \textbf{0.03}\\
\hline
\end{tabular}
\end{subtable}

\vspace{10pt}

\begin{subtable}{\textwidth}
\centering
\caption{Data Set Where Neither Certain Nor Approximately Certain Models Exist}
\label{tab:LRNeither}
\begin{tabular}{|c|cc|cccccccc|ccccc|}
\hline
\multirow{2}{*}{Data Set} & \multicolumn{2}{c|}{\textbf{Examples Cleaned}} & \multicolumn{8}{c|}{Time (Sec)} & \multicolumn{5}{c|}{MSE} \\
\cline{2-16}
 & AC & MI/KI/DI & AC & CM & ACM5 & ACM6 & DI & KI & MI & NI & AC & DI & KI & MI & NI\\
\hline
Air-Quality & 48.80 & 6544 & 0.87 & \textbf{0.01} &  4.62 & N/A & 111.08 & 3.67 & 0.02 & \textbf{0.01} & 28.22 & 2.11 & 3.05 & \textbf{1.07} & 3.47 \\
\hline
\end{tabular}
\end{subtable}

\end{table*}

\begin{table*}[!t]
\centering
\caption{Linear SVM: Performance Comparison on Real-World Datasets with Missing Values}
\label{tab:LinerSVMPerformance}
\vspace{-3mm}

\begin{subtable}{\textwidth}
\centering
\caption{Data Set Where Certain Models Do Not Exist but Approximately Certain Models Exist}
\label{tab:linearSVMnoCMbutACM}
\begin{tabular}{|c|cc|ccccccc|cccccc|}
\hline
\multirow{2}{*}{Data Set} & \multicolumn{2}{c|}{\textbf{Examples Cleaned}} & \multicolumn{7}{c|}{Time (Sec)} & \multicolumn{6}{c|}{Accuracy (\%)} \\
\cline{2-16}
 & AC & MI/KI/DI & AC & CM & ACM & DI & KI & MI & NI & AC & DI & KI & MI & NI & ACM\\
\hline
Intel & 30.0 & 75080 & 355.58 & 276.01 & 2428.76 & N/A &  13775.93 & 275.49 & \textbf{273.24} & 98.80 & N/A & \textbf{98.90} & 97.50 & 98.39 & 98.43\\
\hline
\end{tabular}
\end{subtable}

\vspace{10pt}

\begin{subtable}{\textwidth}
\centering
\caption{Data Set Where Neither Certain Nor Approximately Certain Models Exist}
\label{tab:linearSVMneither}
\begin{tabular}{|c|cc|ccccccc|ccccc|}
\hline
\multirow{2}{*}{Data Set} & \multicolumn{2}{c|}{\textbf{Examples Cleaned}} & \multicolumn{7}{c|}{Time (Sec)} & \multicolumn{5}{c|}{Accuracy(\%)} \\
\cline{2-15}
 & AC & MI/KI/DI & AC & CM & ACM & DI& KI & MI & NI & AC & DI& KI&MI & NI\\
\hline
Water Potability & 29.0 & 1022 & 0.34 & \textbf{0.01} &0.69 & 56.14 & 0.30 & 0.04 & \textbf{0.01} & 49.15 &\textbf{56.33}& 39.95& 39.70 & 41.89 \\
\hline
Online Education & 16.4 & 2493 & 3.88 & 0.03& 5.03 &88.87& 2.09 & 0.14 & \textbf{0.02} &  63.06 & \textbf{63.85} & 62.85& 61.37 & 36.33 \\
\hline
Breast Cancer & 9.8 & 14 & 0.06 & 0.01& 0.15 & 7.37 &0.33 &0.01 & \textbf{0.00} & 50.44 & \textbf{65.94} & \textbf{65.94} & \textbf{65.94} & 34.05 \\

\hline
\end{tabular}
\end{subtable}

\end{table*}
\vspace{-4mm}
\begin{table*}[!t]
\centering
\caption{p-SVM: Performance Comparison on Real-World Datasets with Missing Values-CMs do not exist}
\label{tab:p-SVMnoCM}
\vspace{-3mm}

\label{tab:existing-certainmodeldonotexist-regression}
\begin{tabular}{|c|c|ccccc|cccc|}
\hline
\multirow{2}{*}{Data Set} & \multicolumn{1}{c|}{\textbf{Number of Examples Cleaned}} & \multicolumn{5}{c|}{Time (Sec)} & \multicolumn{4}{c|}{Accuracy(\%)} \\
\cline{2-11}
& MI/KI/DI  & CM &  DI & KI & MI & NI  & DI & KI & MI & NI\\
\hline
Intel Sensor  & 1022 & 3321.85 & N/A & 3498.51 & 3128.54 & \textbf{3007.65} & N/A & 53.78& \textbf{58.37} & 51.98 \\
\hline
Water Potability  & 1022 & 0.05 & 41.27 & 1.03& 0.16 & \textbf{0.06} & 63.94 & 62.17& \textbf{62.96} & 61.19 \\
\hline
Online Education  & 354 & \textbf{0.18} & 57.13 & 0.98& 0.22 & 0.20& 97.13 & 95.52 &    \textbf{97.26} & 93.29 \\
\hline
Breast Cancer  & 11  & \textbf{0.01} & 42.35 & 0.25& 0.01 & 0.01 & \textbf{71.24} & 70.86 & 70.71 & 69.63 \\
\hline
\end{tabular}

\end{table*}

\begin{table*}[!t]
\centering
\caption{DNN: Performance Comparison on Real-World Datasets with Missing Values-CMs do not exist}
\label{tab:linear-regression-performance}
\vspace{-3mm}

\label{tab:arccosSVMnoCM}
\begin{tabular}{|c|c|ccccc|cccc|}
\hline
\multirow{2}{*}{Data Set} & \multicolumn{1}{c|}{\textbf{Number of Examples Cleaned}} & \multicolumn{5}{c|}{Time (Sec)} & \multicolumn{4}{c|}{Accuracy(\%)} \\
\cline{2-11}
& MI/KI/DI  & CM &  DI & KI & MI & NI  & DI & KI & MI & NI\\

\hline
Online Education  & 354 & \textbf{0.16} & 51.37 &0.79 & 0.22 & 0.20 & \textbf{42.75} & 40.96 &  41.04 & 41.06 \\
\hline
Breast Cancer  & 11  & 0.02 & 48.96 &0.02 & \textbf{0.01} & \textbf{0.01}  & 65.20 & 64.26 & \textbf{67.86} & 66.67 \\
\hline
\end{tabular}

\end{table*}

\subsection{Results on Real-world Dataset with Inherent Missingness}
\label{sec:results-original-corruption}
In our certain model experiments with linear regression, linear SVM, and SVM with kernels, we utilize 8 real-world datasets (Section \ref{sec:real-world-with-missing-values}). These datasets originally contain missing values. 

\noindent
\textbf{When certain models exist:} 
We present the result for the first scenario in Table \ref{tab:LRcertainmodelexist}. Certain models exist for NFL and COVID datasets. By checking and learning certain models with zero imputation, we save substantial energy in data cleaning compared to all 4 baselines. This imputation cost saving also comes with guarantees on the optimal model, experimentally proved by almost the same performance between certain models and the models from baseline methods. There is one exception in the COVID dataset where ActiveClean has a regression error slightly different from other baselines. This may be because partial-fit is used to proxy a complete-fit in ActiveClean's implementation \cite{activeclean:github}, which in some cases may converge early but with errors. We further investigate the data scenarios that entail certain models and verify that features irrelevant to the label are the ones with missing values. For instance, the COVID dataset receives regular data updates from 3 different sources. We observe that the newly added features are the ones with missing values.

\noindent
\textbf{When certain models do not exist but approximately certain models exist:} 
Table \ref{tab:LRnoCMbutACM} and \ref{tab:linearSVMnoCMbutACM} present the results for linear regression and linear SVM, respectively. Two datasets (Communities and Intel-Sensor) do not have certain models but have approximately certain models. Approximately certain models result in similar MSE/accuracy compared to all baseline methods, supported by the theoretical guarantee from approximately certain models (Section \ref{sec:Approximate}). In this scenario, checking and learning approximately certain models also eliminates the need for any form of data imputations. In terms of program execution time, we observe similar patterns as the results of randomly corrupted datasets. To understand the {\it influence of data characteristics on certain model existence}, we study the results for both certain models (CM and ACM). We find that certain and approximately certain models are more likely to exist in regression tasks when the number of features is large (e.g., Communities and COVID), and in classification tasks when the number of examples is large (e.g., Intel-Sensor). This is because the uncertainty from incomplete features and examples is diluted in model training when the number of features and examples is large.  

\noindent
\textbf{When neither certain nor approximately certain models exist:} 
Tables \ref{tab:LRNeither}, \ref{tab:linearSVMneither}, \ref{tab:p-SVMnoCM}, and \ref{tab:arccosSVMnoCM}
 show the cases where neither a CM nor an ACM exists (or is not found) for linear regression, linear SVM, p-SVM, and DNN, respectively. We report DNN's result (Table \ref{tab:arccosSVMnoCM}) only on two out of four classification datasets because the DNN's theorem in Section \ref{sec:DNN} only applies to datasets where each example has at most one missing value. For p-SVM and DNN, the prediction accuracy is almost the same from different baseline imputations, which often empirically suggests the existence of certain or approximately certain models. However, certain models do not exist (or are not found). To explain, when certain models do not exist, different imputations may lead to different models. Nonetheless, different models sometimes can still make identical predictions on the testing set. e. In terms of program execution time, checking certain and approximately certain models is worthwhile even if we do not find any upon checking, based on the same reasons discussed in Section \ref{sec:results-random-corruption}.
 


\section{Related Work}\label{sec2}

\noindent{\bf Stochastic and Robust Optimization.} 
Researchers have proposed stochastic optimization to find a model by optimizing the {\it expected loss function over the probability distributions of missing data items} in the training examples \cite{ganti2015sparse}. 
This approach avoids imputing missing values by redefining the loss function to include the uncertainty due to missing values in the training data.
Similarly, in robust optimization, researchers minimize the loss function of a model for the worst-case repair to an incomplete dataset, i.e., the repair that brings the highest training loss, given distributions of the missing values. 
However, the distributions of missing data items are not often available.
Thus, users may spend significant time and effort to discover or train these distributions.
Additionally, for a given type of model, users must solve various and possibly challenging optimization problems for many possible (combinations of) distributions of missing values.
In our approach, users do not need to find the probability distribution of the missing data.
Moreover, our algorithm for each type of model generalizes for all types and distributions of missing values in the training data.

\noindent{\bf Subset Selection over Incomplete Data.}
To save data cleaning costs, researchers propose to select a representative subset of training data and impute the missing values in the subset \cite{chai2023goodcore, wang2015column}. Then, a model is trained with the clean version of this subset. This approach still cleans data items. One still needs to spend time constructing a model to select a proper subset. Also, the trained model is often not the same as the model trained with the whole dataset.

{
\noindent{\bf ML Poisoning Attacks.} Researchers have proposed methods to build ML models that are robust to malicious modifications of training data to induce unwanted behavior in the model \cite{DBLP:conf/pldi/DrewsAD20}. We, however, focus on robustness against missing values in the data.}

\section{Conclusion}\label{sec9}
In this paper, we present the conditions where data repair is not
needed for training optimal and approximately optimal models over incomplete data, i.e., certain or approximately certain models exist. We also offer efficient algorithms for checking and learning certain and approximately models for linear regression, linear SVM, and kernel SVMs. Our experiments with real-world datsets demonstrate significant cost savings in data cleaning compared to five popular benchmark methods, without introducing significant overhead to the running time.

\bibliographystyle{ACM-Reference-Format}
\bibliography{mybib}


\end{document}